\definecolor{myblue}{HTML}{D2E4FC}
\newcommand*\mybluebox[1]{\fcolorbox{black}{myblue}{\hspace{1em}#1\hspace{1em}}}
\definecolor{linkcolor}{RGB}{82,82,192}
\newenvironment{myenv}[1]
  {\mdfsetup{
    frametitle={\colorbox{white}{\space#1\space}},
    innertopmargin=0pt,
    frametitleaboveskip=-\ht\strutbox,
    frametitlealignment=\center
    }
  \begin{mdframed}%
  }
  {\end{mdframed}}
\theoremstyle{plain}
\newtheorem{theorem}{Theorem}[section]
\newtheorem{lemma}[theorem]{Lemma}
\theoremstyle{definition}
\newtheorem{definition}[theorem]{Definition}
\newtheorem{assumption}[theorem]{Assumption}
\theoremstyle{remark}
\newtheorem{remark}[theorem]{Remark}
\def\1{\bm{1}}
\DeclareMathAlphabet{\mathsfit}{\encodingdefault}{\sfdefault}{m}{sl}
\SetMathAlphabet{\mathsfit}{bold}{\encodingdefault}{\sfdefault}{bx}{n}
\def\sR{{\mathbb{R}}}
\newcommand{\E}{\mathbb{E}}
\DeclareMathOperator*{\argmin}{arg\,min}
\DeclareMathOperator{\Ima}{Im}
\def\Tiny{\fontsize{4pt}{4pt}\selectfont}
\newcommand*{\eqdef}{\ensuremath{\overset{\mathclap{\text{\Tiny def}}}{=}}}
\newcolumntype{P}[1]{>{\RaggedRight\hspace{0pt}}p{#1}}
\newcolumntype{X}[1]{>{\RaggedRight\hspace*{0pt}}p{#1}}
\colorlet{linecol}{black!75}
\title{Test like you Train in Implicit Deep Learning}
\author{%
 Zaccharie Ramzi\\
  DMA - ENS, CNRS\\
  \And
  Pierre Ablin\\
  Apple \\
  \And
  Gabriel Peyré\\
  DMA - ENS, CNRS \\
  \And
  Thomas Moreau\\
  Inria
}
\begin{document}

\maketitle

\begin{abstract}
Implicit deep learning has recently gained popularity with applications ranging from meta-learning to Deep Equilibrium Networks~(DEQs).
In its general formulation, it relies on expressing some components of deep learning pipelines implicitly, typically via a root equation called the inner problem.
In practice, the solution of the inner problem is approximated during training with an iterative procedure, usually with a fixed number of inner iterations.
During inference, the inner problem needs to be solved with new data.
A popular belief is that increasing the number of inner iterations compared to the one used during training yields better performance.
In this paper, we question such an assumption and provide a detailed theoretical analysis in a simple setting.
We demonstrate that overparametrization plays a key role: increasing the number of iterations at test time cannot improve performance for overparametrized networks.
We validate our theory on an array of implicit deep-learning problems.
DEQs, which are typically overparametrized, do not benefit from increasing the number of iterations at inference while meta-learning, which is typically not overparametrized, benefits from it.
\end{abstract}

\section{Introduction}
\label{introduction}

Implicit deep learning has seen a surge in recent years with various instances such as Deep Equilibrium Networks~(DEQs;~\citealt{baiDeepEquilibriumModels2019}), OptNets~\citep{amos2017optnet}, Neural ODEs~\citep{chen2018neural}, or meta-learning~\citep{pmlr-v70-finn17a,rajeswaran2019meta}.
In this work, we define implicit deep learning as the setting where intermediary outputs or task-adapted parameters $z^\star \in\sR^{d_z}$ are defined \emph{implicitly} as the solution of a root-finding problem involving the parameters $\theta\in\sR^{d_\theta}$ and the training set $\mathcal{D}_{\text{train}}$, that is, $f(z^\star, \theta, \mathcal{D}_{\text{train}})=0$.
Changing $\theta$ or $\mathcal{D}_{\text{train}}$ changes the landscape of $f$, thus $z^\star$ depends on $\theta$ and $\mathcal{D}_{\text{train}}$.
The optimal parameters $\theta$ are then found by minimizing a loss function $\ell(z^\star, \mathcal{D}_{\text{train}})$, which depends on $\theta$ implicitly through  $z^\star$.
In summary, the learning problem has the following \emph{bilevel} structure:
\begin{equation}
     \argmin_\theta \ell(z^\star(\theta, \mathcal{D}_\text{train}), \mathcal{D}_\text{train}) \quad
     \text{ s.t. } \quad f(z^\star(\theta, \mathcal{D}_\text{train}),\theta, \mathcal{D}_\text{train}) = 0.
    \label{eq:ideal-pb}
\end{equation}
We explain in detail how this formulation covers different cases such as Implicit Meta-learning~(iMAML;~\citealt{rajeswaran2019meta}) and Deep Equilibrium Models~(DEQs;~\citealt{baiDeepEquilibriumModels2019}) in \autoref{sec:related-works}.
The gradient of the loss $\ell$ relative to $\theta$ --called \emph{hypergradient}-- can be computed using the implicit function theorem~(IFT;~ \citealt{Krantz2013TheApplications, jaxopt_implicit_diff}), and then used to learn the optimal $\theta^\star$ by gradient descent.
Once the optimal $\theta^\star$ is learned, the model is used at inference with a new dataset $\mathcal{D}_\text{test}$, and we need to find a new root $z^\star(\theta^\star, \mathcal{D}_\text{test})$.

In the ideal formulation of \eqref{eq:ideal-pb}, the output of the model is independent of the root-finding procedure used to solve the inner problem.
In other words, \citet{baiequilibrium} explains for DEQs that there is "a decoupling between representational capacity and inference-time efficiency".

However, in most cases $z^\star(\theta)$ can only be approximated using an iterative procedure --note that for conciseness, we omit $\mathcal{D}_{\text{train}}$ in quantities that depend on it, but remain explicit for dependency on other datasets.
For example, DEQs rely on Broyden's method~\citep{broyden1965class} or Anderson Acceleration~\citep{anderson}, while meta-learning uses gradient-based methods.
Although some works have tried to tackle the question of how to speed up this iterative procedure by using a learned procedure~\citep{bai2022neural}, warm starting~\citep{micaelli2023recurrence,Bai_2022_CVPR}, or accelerating the hypergradient computation~\citep{fung2022jfb,ramzi2022shine}, in most cases, the number of iterations $N$ used for this procedure is fixed during training in order to keep a reasonable computational budget.\footnote{See the original implementations for \href{https://github.com/locuslab/deq}{DEQs} or \href{https://github.com/aravindr93/imaml_dev}{implicit meta-learning}}
We denote the resulting approximation $z_N(\theta)$.
The practical problem solved in implicit deep learning then becomes:
\begin{empheq}[box=\mybluebox]{equation*}\tag{P}\label{eq:pbp}
    \begin{aligned}
         \small
         \vphantom{\sum_0^i}\theta^{\star, N}  \in \argmin_\theta \ell(z_N(\theta))
         \text{ s.t. } z_N(\theta) \text{ is the $N$-th iterate of a procedure solving } f(z, \theta) = 0.
    \end{aligned}
\end{empheq}
Here, we highlight the dependency of the solution $\theta^{\star, N}$ on the number of inner iterations $N$.
At inference, one should decide how many iterations are used to solve the inner problem.
We ask:
\begin{myenv}{Question}
    \centering Is there a performance benefit in changing the number of inner iterations $N$ once the model is fitted for implicit deep learning?\

\vspace{-0.8em}\vphantom{1}
\end{myenv}
Because of the decoupling in the ideal case, many papers hypothesize that, when the model is fixed, a better approximation of the solution of the inner problem --i.e. increasing $N$-- could bring performance benefits~\citep{baiequilibrium,pal2022continuous}.
For instance, \citet{gilton2021deep} state that ``the computational budget can be selected at test time to optimize context-dependent trade-offs between accuracy and computation''.
However, they only show that performance deteriorates when the number of test-time inner iterations is lower than in training.
They also suggest that networks trained with IFT are more robust to changes in the number of inner iterations than their unrolled counterparts, where the hypergradient is computed via backpropagation through the inner solver steps.
\citet{anil2022path} ask whether more test-time iterations can help tackle "harder" problems -- for example increasing the dimension in a maze resolution problem.
They empirically show that if a DEQ has a property termed path-independence, it can benefit from more test-time iterations to improve the performance out-of-distribution.
However, there is little evidence that increasing the number of iterations at test-time can improve the performance for DEQs on the same data distribution.
In the meta-learning setup, the number of inner iterations can be greater during inference than during training, for example in the text-to-speech experiment of \citet{chen2020modular} --100 iterations during training vs 10 000 iterations at test-time.
\citet{pmlr-v70-finn17a} also showed that the performance increases when the number of inner iterations increases at test-time.

Formally, we would like to know if changing the number of iterations to approximate the root from $N$ to $N + \Delta N$ with $\Delta N > -N$ --i.e., using $z_{N + \Delta N}(\theta^{\star, N})$ instead of $z_N(\theta^{\star, N})$-- can yield better performance.
In this paper, we answer that question first with a theoretical analysis in a simple case and then with extensive empirical results.
In our theoretical analysis, we study ${D(N, \Delta N) \eqdef \ell(z_{N + \Delta N}(\theta^{\star, N})) - \ell(z_{N}(\theta^{\star, N})) }$, the training loss increase when changing the number of inner iterations by $\Delta N$ for a fixed learned $\theta^{\star, N}$.
This quantity is a proxy for the increase in test loss, provided we have access to enough training data.
On the other hand, we consider changes in test-set metrics in our experiments.

Theoretically, we uncover for  overparametrized models a phenomenon we term \textbf{Inner Iterations Overfitting~(I2O)}, in which there is no benefit in increasing the number of inner iterations.
We empirically find that DEQs suffer from I2O, while we confirm the benefit of increasing the number of inner iterations for the meta-learning setup.
Our contributions are the following:
\begin{itemize}[leftmargin=*]
    \item The \textbf{theoretical demonstration of I2O} for the case where $f$ is affine. In \autoref{thm:quadratic}, we derive a lower bound on $D(N, \Delta N)$ and characterize two regimes in subsequent corollaries: overparametrization in $\theta$ which leads to I2O and no overparametrization. This provides a practical guideline for DEQs that fall close to the overparametrization regime: in order to achieve the best performance at test time, one should use the same number of inner iterations as in training.
    \item The \textbf{empirical demonstration of I2O} for DEQs on diverse tasks such as image classification, image segmentation, natural language modeling and optical flow estimation. This validates the practical guideline established above, and also the current practice.
    We also show that I2O is much less prevalent for meta-learning cases.
    \item The \textbf{comparison of robustness to changes in inner iterations number}  between IFT and unrolling. We show with \autoref{thm:iftcvg} that the choice of hypergradient computation does not impact whether implicit deep learning suffers from I2O or not. We also highlight this phenomenon empirically for DEQs.
\end{itemize}

\section{Background on Implicit Deep Learning}
\label{sec:related-works}

\vskip.5em\textbf{DEQs~}
DEQs were introduced by \citet{baiDeepEquilibriumModels2019} for NLP and have since then been used for a variety of tasks including computer vision~\citep{baimultiscalemodels2020,micaelli2023recurrence}, inverse problems~\citep{gilton2021deep,zou2023deep} or optical flow estimation~\citep{Bai_2022_CVPR}.
On optical flow estimation~\citep{Bai_2022_CVPR} and landmark detection~\citep{micaelli2023recurrence}, they have even managed to achieve a new state of the art.
In their ideal formulation, DEQs are trained by minimizing a task-specific loss on a dataset, where the output of the model is obtained by computing the fixed point of a nonlinear function~\citep{baiDeepEquilibriumModels2019}:
\begin{equation}
    \label{eq:deq-formulation}
    \argmin_\theta \sum_i \ell(z^\star_i(x_i, \theta), y_i) \quad \text{s.t.} \quad z^\star_i(x_i, \theta) = g(z^\star_i(x_i, \theta), x_i, \theta)
\end{equation}
They can be fitted in the framework introduced in Eq.~\eqref{eq:ideal-pb} by lifting the inner variables $z_i$ to a stacked version $\mathbf{z} = [z_1, \ldots, z_n]^\top$ and similarly for the inner functions and the outer losses.
% $\mathbf{f}(\mathbf{z}, \theta) = [f(z_1, x_1, \theta), \ldots, f(z_n, x_n, \theta)]^\top$ and $\mathbf{l}(\mathbf{z}) = 1^\top [\ell(z_1, y_1), \ldots, \ell(z_n, y_n)]$.
The inner problem can also be cast as a root finding problem rather than a fixed point problem, by simply considering $f(z, x_i, \theta) = z - g(z, x_i, \theta)$.
The gradient of the outer losses with respect to the parameters $\theta$ is then computed using the IFT, which yields:
\begin{equation}
    \label{eq:hypergrad-ift}
    \nabla L_i =  -  \left( \partial_z f\left(z^\star, x_i, \theta \right)^{-1} \partial_\theta f\left(z^\star, x_i, \theta \right) \right)^\top \nabla_z \ell\left(z^\star, y_i\right),
\end{equation}
with $L_i(\theta) = \ell(z^\star_i(x_i, \theta), y_i)$.
Importantly, this gradient computation does not depend on the procedure used to compute $z^\star$.
Therefore, the intermediary outputs (activations) used to compute it do not need to be saved in order to compute the gradient, leading to a memory-efficient scheme.

\vskip.5em\textbf{(i)MAML~}
Model-agnostic Meta-learning~(MAML) was introduced by \citet{pmlr-v70-finn17a} as a technique to train neural networks that can quickly adapt to new tasks.
The idea is to learn a meta model, such that its parameters, when trained on a new task, yield good generalization performance.
Formally, MAML has the following formulation:
\begin{equation}
    \label{eq:meta-learning}
        \argmin_{\theta^\text{(meta)}} \sum_i \ell\left(\theta_i, \mathcal{X}_i^\text{(val)}\right)\quad \text{s.t.} \quad \theta_i = \theta^\text{(meta)} - \alpha \nabla_\theta \ell(\theta^\text{(meta)}, \mathcal{X}_i^\text{(train)}),
\end{equation}
where we omitted the dependence of $\theta_i$ on $\theta^\text{(meta)}$ and $\mathcal{X}_i^\text{(train)}$ for conciseness.
The right hand-side corresponds to the training on a new task: a single gradient descent step with step size $\alpha$ to minimize a task specific training loss, the task $i$ being defined by its training and validation datasets $\mathcal{X}_i^\text{(train)}$ and $\mathcal{X}_i^\text{(val)}$.
The task-adapted parameters $\theta_i$ are then used to compute the loss on the validation set to measure how well these parameters generalize.
In a follow-up work, \citet{rajeswaran2019meta} introduced the implicit MAML~(iMAML) formulation, where the gradient descent step is replaced by the minimization of a regularized task specific loss.
Formally, the problem is:
\begin{equation}
    \label{eq:imaml}\argmin_{\theta^\text{(meta)}} \sum_i \ell\left(\theta_i, \mathcal{X}_i^\text{(val)}\right) \quad
    \text{s.t.} \quad \theta_i \in \argmin_\theta \underbrace{\ell(\theta, \mathcal{X}_i^\text{(train)}) + \frac{\lambda}{2} \|\theta - \theta^\text{(meta)}\|_2^2}_{F(\theta, \theta^\text{(meta)})}
\end{equation}
This formulation can easily be cast into the form of Eq.~\eqref{eq:ideal-pb} by replacing the inner optimization problem with the associated root problem on the gradient $\nabla_\theta F(\theta, \theta^\text{(meta)})=0$.
Then, similarly to the DEQ setting, the task adapted parameters $\theta_i$ can be stacked together to form a single inner variable $\mathbf{z} = [\theta_1, \ldots, \theta_n]^\top$.

%%%%%%%%%%%%%%%%%%%%%%%%%%%%%%%%%%%%%%%%%%%%%%%%%%%%%%%%%%%%%%%%%
%   Theory
%%%%%%%%%%%%%%%%%%%%%%%%%%%%%%%%%%%%%%%%%%%%%%%%%%%%%%%%%%%%%%%%%

\section{Theory of affine implicit deep learning}
\label{sec:theory}
In order to study the I2O phenomenon, we will make some simplifying assumptions on the nature of the inner procedure of problem~\eqref{eq:pbp} as well as the functions studied.
First, we consider that the procedure to solve the inner problem is a fixed-point iteration method with fixed step size.
Second, we restrict ourselves to the case where the inner problem is affine.
Third, we consider only cases where the outer loss is quadratic.

Before we proceed to the formal demonstration of I2O, let us give some intuition.
When the inner problem is overparametrized in the outer variable, it means that we can tune the approximate output of the procedure $z_N$ to minimize exactly the outer loss.
But this tuning is highly dependent on the procedure, and therefore on the number of inner iterations $N$ used.
If it is changed while keeping the same learned outer variable, the outer loss cannot decrease because it is already minimized.

\paragraph{Main result and corollaries}
Let us formalize the assumptions used to prove our main result.

\begin{assumption}[Fixed-point iteration]
    \label{ass:fpi} The procedure to solve the inner problem $f(z, \theta) = 0$ is a fixed-point iteration method with fixed step size $\eta >0$ and initialization $z_0 \in \sR^{d_z}$.
    This means that:
    \begin{equation}
        \label{eq:fpi}
        z_{N+1}(\theta) = z_{N}(\theta) - \eta f(z_{N}(\theta), \theta)\enspace.
    \end{equation}
\end{assumption}
For iMAML, this corresponds to gradient descent to solve the task adaptation, with $f = \nabla_z F$ with $F$ the regularized task specific loss.
Note that it does not perfectly match the practice for DEQs which are usually trained with Broyden's method~\citep{broyden1965class} or Anderson acceleration~\citep{anderson}.
Also, since $z_0$ does not depend on $\theta$, this does not cover the MAML setting.

\begin{assumption}[Affine inner problem]
    \label{ass:aff-inner} $f: \sR^{d_z \times d_\theta} \to \sR^{d_z}$ is an affine function:
    \begin{equation}
        \label{eq:aff}
        f(z, \theta) = K_\text{in}^\top(Bz + U\theta + c),
    \end{equation}
    with $K_\text{in}, B \in \sR^{d_x \times d_z}, U \in \sR^{d_x \times d_\theta}, c \in \sR^{d_x}$, with $K_\text{in}$ surjective, i.e. $d_x \leq d_\theta$.
    Moreover, $BK_\text{in}^\top$ has eigenvalues with positive real part.
\end{assumption}
\autoref{ass:aff-inner} corresponds to considering a DEQ with an affine layer, a setting commonly used to study DEQs~\citep{kawaguchi2021theory}, although in practice $f$ is nonlinear.
This class of function corresponds to affine functions for which the fixed point iterations~\eqref{eq:fpi} converge (see \autoref{app:affine-inner}).
In particular, it is more general than the simple case of $K_\text{in} = I$ with $B$ whose eigenvalues have a positive real part.
For iMAML, this corresponds to meta-learning a linear model with a quadratic regression loss.
We show these two correspondences in \autoref{app:affine-inner}.
More generally, it includes cases where $f$ is the gradient of a convex lower-bounded quadratic function $F(z, \theta) = \frac12 \|K_\text{in}z + U\theta + c\|_2^2$, with $B = K_\text{in}$, a setting studied by \citet{pmlr-v162-vicol22a}.
We provide an extended review of this work in \autoref{app:related}.

\begin{assumption}[Quadratic outer loss]
    \label{ass:quadratic}
    $\ell$ is a convex quadratic function bounded from below:
    \begin{equation}
        \ell(z) = \frac12 \|K_\text{out} z - \omega\|_2^2,
    \end{equation}
    with $\omega \in \sR^{d_\omega}$ and $K_\text{out} \in \sR^{d_z \times d_\omega}$.
\end{assumption}

\autoref{ass:quadratic} is meaningful, typically in inverse problems~\citep{zou2023deep} or meta-learning regression~\citep{pmlr-v70-finn17a,rajeswaran2019meta} where the output of the inner problem, a recovered signal, is compared to a ground truth signal.
It does not include different types of problems such as image classification~\citep{baimultiscalemodels2020} that would require more complex losses such as cross-entropy.

Before stating our main result, we recall that our objective is to control ${D(N, \Delta N) \eqdef \ell(z_{N + \Delta N}(\theta^{\star, N})) - \ell(z_{N}(\theta^{\star, N}))}$, the loss increase when the number of inner iterations changes by $\Delta N$.

\begin{myenv}{Main result}
    \begin{restatable}[Inner Iterations Overfitting for affine inner problems]{thm}{quadratic}
        \label{thm:quadratic}
        Under \autoref{ass:fpi}, \autoref{ass:aff-inner} and \autoref{ass:quadratic}, we have:
        \begin{equation}
            \label{ineq:main-ineq}
            D(N, \Delta N) \geq - \frac12 \| \textcolor{xkcdBluishGreen}{\left(\mathcal{P}(K_\text{out} K_\text{in}^\top) - \mathcal{P}(K_\text{out} K_\text{in}^\top E_N U )\right)}  \textcolor{xkcdMauve}{(K_\text{out} r_N - \omega)}  \|_2^2,
        \end{equation}
        where $\mathcal{P}(X)$ is the orthogonal projection on $\mathrm{range}(X)$, ${E_N = \left((I - \eta B K_\text{in}^\top )^N - I\right) (B K_\text{in}^\top )^{-1}}$ and $r_N = K_\text{in}^\top E_N  (Bz_0 + c) +  z_0$.
    \end{restatable}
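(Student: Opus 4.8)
First I would unroll the fixed-point recurrence \eqref{eq:fpi} to get a closed form for $z_N(\theta)$. Writing $M = I - \eta K_\text{in}^\top B$, \autoref{ass:fpi} and \autoref{ass:aff-inner} give the affine recurrence $z_{N+1}(\theta) = M z_N(\theta) - \eta K_\text{in}^\top(U\theta + c)$, whose solution is $z_N(\theta) = M^N z_0 - \eta\big(\sum_{k=0}^{N-1}M^k\big)K_\text{in}^\top(U\theta + c)$. The aim of this step is to re-express everything through the $d_x\times d_x$ matrix $\widetilde M = I - \eta BK_\text{in}^\top$, so that $E_N$ and $r_N$ surface. The key tool is the commutation identity $M^k K_\text{in}^\top = K_\text{in}^\top \widetilde M^k$ (an easy induction from $K_\text{in}^\top B\,K_\text{in}^\top = K_\text{in}^\top(BK_\text{in}^\top)$), which turns $\sum_{k=0}^{N-1}M^k K_\text{in}^\top$ into $-\tfrac1\eta K_\text{in}^\top E_N$ after summing the geometric series and using that $BK_\text{in}^\top$ is invertible (its eigenvalues have positive real part by \autoref{ass:aff-inner}). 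This yields the affine-in-$\theta$ form $z_N(\theta) = r_N + K_\text{in}^\top E_N U\,\theta$. To match the constant term with the stated $r_N$, I would verify that $M^N z_0 + K_\text{in}^\top E_N c$ equals $r_N = K_\text{in}^\top E_N(Bz_0+c)+z_0$; their difference is $(M^N - I)(I-P)z_0$ with $P = K_\text{in}^\top(BK_\text{in}^\top)^{-1}B$ an oblique projection onto $\mathrm{range}(K_\text{in}^\top)$, and it vanishes because $(I-P)z_0 \in \ker B = \ker(M-I)$ (using that $K_\text{in}^\top$ is injective). Hence $r_N = z_N(0)$.

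With $z_N(\theta) = r_N + K_\text{in}^\top E_N U\theta$, the outer loss is a convex quadratic in $\theta$: setting $G_N = K_\text{out}K_\text{in}^\top E_N U$ and $b_N = K_\text{out}r_N - \omega$, \autoref{ass:quadratic} gives $\ell(z_N(\theta)) = \tfrac12\|G_N\theta + b_N\|_2^2$. Problem \eqref{eq:pbp} then makes $\theta^{\star,N}$ a least-squares minimizer, so for any such minimizer the attained value is the squared norm of the residual projected off $\mathrm{range}(G_N)$, namely $\ell(z_N(\theta^{\star,N})) = \tfrac12\|(I - \mathcal{P}(G_N))b_N\|_2^2$ with $\mathcal{P}(G_N) = \mathcal{P}(K_\text{out}K_\text{in}^\top E_N U)$. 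This is precisely the subtracted reference term in $D(N,\Delta N)$.

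The heart of the argument is a lower bound on the added term $\ell(z_{N+\Delta N}(\theta^{\star,N})) = \tfrac12\|G_{N+\Delta N}\theta^{\star,N} + b_{N+\Delta N}\|_2^2$; write $N' = N+\Delta N$ and $Q = \mathcal{P}(K_\text{out}K_\text{in}^\top)$. Two facts drive it. First, since $G_{N'} = (K_\text{out}K_\text{in}^\top)(E_{N'}U)$ has range inside $\mathrm{range}(K_\text{out}K_\text{in}^\top)$, we have $(I-Q)G_{N'} = 0$, whence $\|G_{N'}\theta^{\star,N}+b_{N'}\|_2^2 \geq \|(I-Q)b_{N'}\|_2^2$. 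Second---and this is the observation that produces $\mathcal{P}(K_\text{out}K_\text{in}^\top)$ rather than $I$ in the bound---the component of $b_{N'}$ orthogonal to $\mathrm{range}(K_\text{out}K_\text{in}^\top)$ does not depend on $N'$: substituting $K_\text{out}r_{N'} = K_\text{out}K_\text{in}^\top E_{N'}(Bz_0+c) + K_\text{out}z_0$ and using $(I-Q)K_\text{out}K_\text{in}^\top = 0$ gives $(I-Q)b_{N'} = (I-Q)(K_\text{out}z_0 - \omega)$, the same vector for every index. Hence $\|G_{N+\Delta N}\theta^{\star,N}+b_{N+\Delta N}\|_2^2 \geq \|(I-Q)b_N\|_2^2$. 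Finally, because $\mathrm{range}(G_N)\subseteq\mathrm{range}(Q)$, the operator $I - \mathcal{P}(G_N)$ splits into orthogonal projections onto orthogonal subspaces, $I - \mathcal{P}(G_N) = (I-Q) + (Q - \mathcal{P}(G_N))$, so Pythagoras gives $\|(I-\mathcal{P}(G_N))b_N\|_2^2 = \|(I-Q)b_N\|_2^2 + \|(Q-\mathcal{P}(G_N))b_N\|_2^2$. Subtracting the two terms of $D$, the $\|(I-Q)b_N\|_2^2$ contributions cancel and I am left with $D(N,\Delta N) \geq -\tfrac12\|(Q - \mathcal{P}(G_N))b_N\|_2^2$, which is exactly \eqref{ineq:main-ineq}.

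I expect the main obstacle to be the algebraic bookkeeping of the first step: correctly passing between the $d_z\times d_z$ operator $K_\text{in}^\top B$ and the $d_x\times d_x$ operator $BK_\text{in}^\top$ via the commutation identity, and establishing $r_N = z_N(0)$ through the kernel argument. The conceptual crux is then recognizing that only the part of the residual orthogonal to $\mathrm{range}(K_\text{out}K_\text{in}^\top)$ is frozen across $N$; once that is isolated, the remainder is a convex least-squares minimum combined with an orthogonal decomposition. The positive-real-part spectrum of $BK_\text{in}^\top$ enters only to make $E_N$ well defined.
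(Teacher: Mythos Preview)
Your proposal is correct and follows essentially the same route as the paper: derive the affine form $z_N(\theta)=K_\text{in}^\top E_N U\theta + r_N$, compute the least-squares minimum $\ell(z_N(\theta^{\star,N}))=\tfrac12\|(I-\mathcal{P}(G_N))b_N\|_2^2$, lower-bound $\ell(z_{N+\Delta N}(\theta^{\star,N}))$ by $\tfrac12\|(I-Q)b_{N+\Delta N}\|_2^2$, and subtract. Your derivation of the closed form via the commutation identity $M^kK_\text{in}^\top=K_\text{in}^\top\widetilde M^k$ differs technically from the paper's inductive argument (which tracks $y_N$ with $z_N-z_0=K_\text{in}^\top y_N$), and you make explicit the key fact that $(I-Q)b_{N'}$ is independent of $N'$, which the paper uses but leaves unstated when it passes from the exact expression for $\ell(z_{N'}(\theta^{\star,N}))$ to the final bound.
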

\end{myenv}

The detailed proof is in \autoref{app:proof-main} where we also cover bilevel optimization settings.
We give closed-form expressions for $\ell(z_{N + \Delta N}(\theta^{\star, N}))$ and $D(N, \Delta N)$.

The lower bound of $D(N, \Delta N)$ is independent of $\Delta N$ which means that for every $N$ the maximum decrease in loss achievable by increasing the number of inner iterations at test time is bounded from below.
This lower bound is the negative squared norm of a vector ${\textcolor{xkcdBluishGreen}{\left(\mathcal{P}(K_\text{out} K_\text{in}^\top) - \mathcal{P}(K_\text{out} K_\text{in}^\top E_N U )\right)} \textcolor{xkcdMauve}{(K_\text{out} r_N - \omega)} }$.
Let us describe each component of this expression.
$\textcolor{xkcdMauve}{(K_\text{out} r_N - \omega)} $ is a vector that encompasses elements from the inner problem and procedure ($r_N$) and from the outer problem ($\omega$, $K_\text{out}$).
$\textcolor{xkcdBluishGreen}{\left(\mathcal{P}(K_\text{out} K_\text{in}^\top) - \mathcal{P}(K_\text{out} K_\text{in}^\top E_N U )\right)}$ is a difference of projectors which measures how much $U$ is "not surjective relative to $K_\text{out} K_\text{in}^\top E_N$", and this is highlighted by the next two corollaries which explain in more details the role the surjectivity --i.e. the column rank-- of $U$ in this lower bound.

\begin{restatable}[Overparametrization in $\theta$]{cor}{overparam}
    \label{cor:overparam}
    Under \autoref{ass:fpi}, \autoref{ass:aff-inner} and \autoref{ass:quadratic}, if $U$ is surjective, we have for all $\Delta N$:
    \begin{equation}
        \label{ineq:main-ineq-overparam}
        \ell(z_{N + \Delta N}(\theta^{\star, N})) \geq \ell(z_N(\theta^{\star, N})),
    \end{equation}
\end{restatable}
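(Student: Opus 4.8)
The plan is to deduce the corollary directly from the lower bound of \autoref{thm:quadratic}. Since the right-hand side of \eqref{ineq:main-ineq} is the negative of a squared norm, it becomes nonnegative exactly when the vector inside the norm vanishes; I would obtain this by showing something stronger, namely that the \emph{operator} in front of $(K_\text{out} r_N - \omega)$ is itself the zero map. Concretely, the goal reduces to proving
\begin{equation*}
    \mathcal{P}(K_\text{out} K_\text{in}^\top) = \mathcal{P}(K_\text{out} K_\text{in}^\top E_N U),
\end{equation*}
and since an orthogonal projector is determined by its range, this is equivalent to the equality of subspaces $\mathrm{range}(K_\text{out} K_\text{in}^\top E_N U) = \mathrm{range}(K_\text{out} K_\text{in}^\top)$.

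The elementary fact I would invoke is that for matrices $A$ and $M$ with $M$ surjective onto the domain of $A$, one has $\mathrm{range}(AM) = A\,\mathrm{range}(M) = \mathrm{range}(A)$. Applying this with $A = K_\text{out} K_\text{in}^\top$, viewed as a map $\sR^{d_x} \to \sR^{d_\omega}$, and $M = E_N U$, the whole argument collapses to checking that $E_N U$ is surjective onto $\sR^{d_x}$. Because $U$ is surjective by hypothesis (this is the overparametrization assumption), and because the composition of an invertible map with a surjection is again a surjection, it suffices to establish that $E_N \in \sR^{d_x \times d_x}$ is invertible.

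This last step is the one nontrivial point, and it is where I expect the main obstacle to lie. Recall $E_N = \left((I - \eta B K_\text{in}^\top)^N - I\right)(B K_\text{in}^\top)^{-1}$. First, $B K_\text{in}^\top$ is invertible because by \autoref{ass:aff-inner} its eigenvalues have positive real part, so $(B K_\text{in}^\top)^{-1}$ is well defined. It then remains to show $(I - \eta B K_\text{in}^\top)^N - I$ is invertible, which I would argue spectrally: if $\mu$ is an eigenvalue of $B K_\text{in}^\top$, the corresponding eigenvalue of this matrix is $(1 - \eta\mu)^N - 1$, nonzero precisely when $(1-\eta\mu)^N \neq 1$. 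This is exactly where the convergence of the fixed-point iteration \eqref{eq:fpi} enters: the spectral condition in \autoref{ass:aff-inner} guarantees $|1 - \eta\mu| < 1$ (as discussed in \autoref{app:affine-inner}), whence $|(1-\eta\mu)^N| < 1$ and in particular $(1-\eta\mu)^N \neq 1$. Thus no eigenvalue of $E_N$ vanishes and $E_N$ is invertible.

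Putting the pieces together, $E_N U$ is surjective onto $\sR^{d_x}$, so $\mathrm{range}(K_\text{out} K_\text{in}^\top E_N U) = \mathrm{range}(K_\text{out} K_\text{in}^\top)$, the two projectors coincide, and the bracketed operator in \eqref{ineq:main-ineq} is zero. The lower bound then reads $D(N, \Delta N) \geq 0$ for every $\Delta N$, which rearranges into \eqref{ineq:main-ineq-overparam} and completes the argument. The only care required beyond the spectral step is bookkeeping of dimensions and the range algebra, both of which are routine once the invertibility of $E_N$ is in hand.
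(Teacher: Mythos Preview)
Your proposal is correct and follows essentially the same route as the paper: reduce the right-hand side of \eqref{ineq:main-ineq} to zero by showing $\mathcal{P}(K_\text{out} K_\text{in}^\top E_N U) = \mathcal{P}(K_\text{out} K_\text{in}^\top)$ via surjectivity of $U$ and invertibility of $E_N$, then invoke \autoref{thm:quadratic}. Your spectral argument for the invertibility of $E_N$ is in fact slightly sharper than the paper's (which only asserts invertibility for $N \geq N_0$ by arguing that $(I-\eta B K_\text{in}^\top)^N - I \to -I$), but the overall structure is identical.
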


\begin{proof}
    When $U$ is surjective, $\mathcal{P}(K_\text{out} K_\text{in}^\top E_N U ) = \mathcal{P}(K_\text{out} K_\text{in}^\top E_N )$.
    Further, $\forall N \geq N_0$, $E_N$ is invertible.
    Therefore, $\mathcal{P}(K_\text{out} K_\text{in}^\top E_N ) = \mathcal{P}(K_\text{out} K_\text{in}^\top)$.
    We can conclude using \autoref{thm:quadratic}.
\end{proof}

In other words, when the inner model is sufficiently expressive, a regime called overparametrization, the inner variable is simply reparametrized with the outer variable, allowing the overall procedure to find the global minimum.

We numerically validate \autoref{cor:overparam} with small-scale experiments on a quadratic bilevel optimization problem, and show the results in \autoref{fig:quad-U-surj} (left).

\begin{figure}
    \begin{subfigure}[t]{0.39\textwidth}
        \centering
        \includegraphics{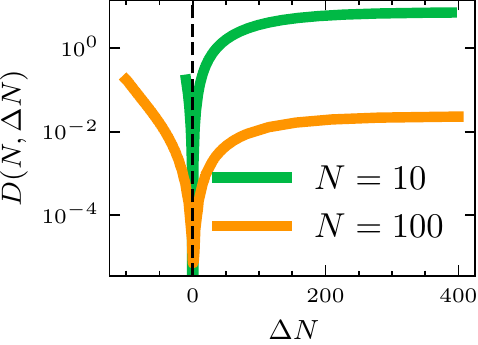}
    \end{subfigure}
    \hfill
    \begin{subfigure}[t]{0.6\textwidth}
        \centering
        \includegraphics{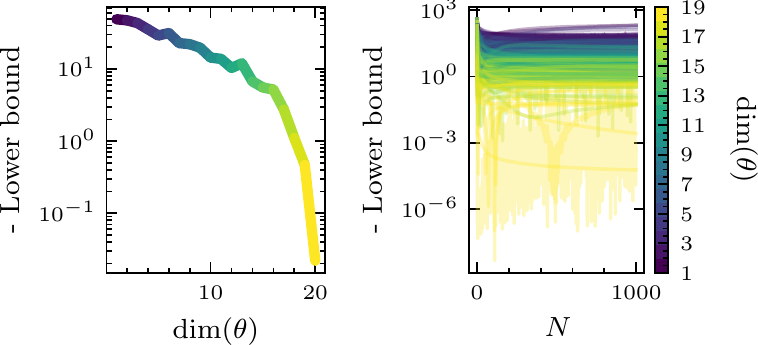}
    \end{subfigure}
    \caption{\label{fig:quad-U-surj} \textbf{Illustrations of the corollaries}. The left-hand side figure illustrates \autoref{cor:overparam}, i.e., the overparametrization regime.
    The black dashed line is at $\Delta N = 0$.
    We consider $z$ of dimension 5, $\theta$ of dimension 4, $F$ whose gradient is $f$ and $\ell$ convex but not strongly convex quadratic functions.
    The right-hand side figure illustrates \autoref{cor:avgcase}, i.e. the average case.
    We report the Negative Lower bound, i.e. $\frac12 \| \textcolor{xkcdBluishGreen}{\left(\mathcal{P}(K_\text{out} K_\text{in}^\top) - \mathcal{P}(K_\text{out} K_\text{in}^\top E_N U )\right)}  \textcolor{xkcdMauve}{(K_\text{out} r_N - \omega)}  \|_2^2$, from \autoref{thm:quadratic}. The inner and outer problem are strongly convex, and the dimension of $z$ is 20. The inner problem would be overparametrized in $\theta$ on average for dimension 20. We compute the lower bound for different inner optimization times and 20 seeds. In the left panel, we show the negative lower bound for $N=100$ averaged over all the seeds for each dimension of $\theta$. In the right panel, we show the negative lower bounds for all seeds and for all $N$ for different dimensions of $\theta$.}
\end{figure}

However, one can wonder what can be said when we are not in the overparametrized regime.
The next corollary shows what happens when the matrix $U$ is not overparametrized, and its entries are drawn i.i.d. from a Gaussian distribution.
\begin{restatable}[Average case]{cor}{avgcase}
    \label{cor:avgcase}
    Under \autoref{ass:fpi}, \autoref{ass:aff-inner} and \autoref{ass:quadratic}, if $K_\text{in}$ is invertible and $\ell$ is strongly convex, we have:
    \begin{equation}
        \label{ineq:main-ineq-avgcase}
        \E_{U \sim \mathcal{N}(0, I)} \left[D(N, \Delta N) \right] \geq - \frac12 \textcolor{xkcdBluishGreen}{(1 - \frac{\min(d_x, d_\theta)}{d_x})} \textcolor{xkcdMauve}{ (\rho(K_\text{out}) \|r_\text{max}\|_2^2 + \|\omega\|_2^2)},
    \end{equation}
    with $\rho(K_\text{out})$ the spectral radius of $K_\text{out}$ and $\|r_\text{max}\|_2^2 \in \max_N \|r_N\|_2^2$.
\end{restatable}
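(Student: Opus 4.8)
The plan is to start from the lower bound of \autoref{thm:quadratic} and exploit the two extra hypotheses ($K_\text{in}$ invertible, $\ell$ strongly convex) to collapse the difference of projectors into a single orthogonal projector onto a subspace of controlled dimension, and then average the resulting squared projection over the Gaussian $U$. The key simplification is that $r_N$, and hence the vector $w := K_\text{out} r_N - \omega$, does \emph{not} depend on $U$, so only the projector $\mathcal{P}(K_\text{out} K_\text{in}^\top E_N U)$ is random.

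First I would record how the assumptions simplify the projectors. Invertibility of $K_\text{in}$ gives $d_x = d_z$ and, as in the proof of \autoref{cor:overparam}, $E_N$ invertible for large $N$; strong convexity forces $K_\text{out}$ to have full column rank, so $\mathrm{range}(K_\text{out} K_\text{in}^\top) = \mathrm{range}(K_\text{out} K_\text{in}^\top E_N) = \mathrm{range}(K_\text{out}) =: S$, a subspace of dimension $d_x$. Writing $V := K_\text{out} K_\text{in}^\top E_N$ (injective, $\mathrm{range}(V) = S$), the bound becomes $D(N, \Delta N) \ge -\tfrac12 \|(\mathcal{P}(K_\text{out}) - \mathcal{P}(VU)) w\|_2^2$. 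Since $\mathrm{range}(VU) \subseteq S$, the two projectors commute and $\mathcal{P}(K_\text{out}) - \mathcal{P}(VU)$ is itself the orthogonal projector onto the orthogonal complement of $\mathrm{range}(VU)$ inside $S$, of dimension $d_x - \mathrm{rank}(U) = d_x - \min(d_x, d_\theta)$ almost surely.

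Next I would isolate the randomness via an SVD $V = W_1 \Sigma R^\top$ with $W_1$ an isometry onto $S$, $\Sigma$ diagonal positive and $R$ orthogonal: rotation invariance of the Gaussian gives $VU \overset{d}{=} W_1 \Sigma U$, hence $\|(\mathcal{P}(K_\text{out}) - \mathcal{P}(VU)) w\|_2^2 \overset{d}{=} \|(I - \mathcal{P}(\Sigma U)) u\|_2^2$ with $u := W_1^\top w$ and $\mathcal{P}(\Sigma U)$ the projector in $\sR^{d_x}$ onto $\mathrm{range}(\Sigma U)$. The engine of the computation is the elementary averaging identity $\E_U \|(I - \mathcal{P}(U)) x\|_2^2 = (1 - \tfrac{\min(d_x,d_\theta)}{d_x}) \|x\|_2^2$, valid because $\mathrm{range}(U)$ is a uniformly (Haar) random $\min(d_x,d_\theta)$-dimensional subspace of $\sR^{d_x}$; this is exactly where the factor $1 - \min(d_x,d_\theta)/d_x$ originates. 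To reduce the distorted projector to this isotropic situation I would use $\|(I - \mathcal{P}(\Sigma U)) u\|_2 = \min_v \|u - \Sigma U v\|_2 \le \rho(\Sigma) \min_v \|\Sigma^{-1} u - U v\|_2 = \rho(\Sigma) \|(I - \mathcal{P}(U)) \Sigma^{-1} u\|_2$, apply the identity, and absorb $\Sigma^{-1}$ through its smallest singular value. Finally I would convert $\|u\|_2 = \|\mathcal{P}(K_\text{out}) w\|_2 \le \|w\|_2$ into the stated right-hand side, bounding $\|w\|_2^2 = \|K_\text{out} r_N - \omega\|_2^2$ by $\rho(K_\text{out}) \|r_N\|_2^2 + \|\omega\|_2^2$ (after controlling the cross term) and replacing $\|r_N\|_2$ by the uniform bound $\|r_\text{max}\|_2 = \max_N \|r_N\|_2$, which is finite because $E_N$, and thus $r_N$, converges under \autoref{ass:aff-inner}. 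Taking $\E_U$ and recombining yields \eqref{ineq:main-ineq-avgcase}.

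The main obstacle is precisely the spectral distortion carried by $V = K_\text{out} K_\text{in}^\top E_N$: the image $V(\mathrm{range}(U))$ of a uniform random subspace is \emph{not} uniform inside $S$, so the clean $(1 - \min(d_x,d_\theta)/d_x)$ averaging cannot be applied to $\mathcal{P}(VU)$ directly. The substitution $u - \Sigma U v = \Sigma(\Sigma^{-1} u - U v)$ is what restores isotropy, but at the price of spectral factors of $V$; the delicate part is the bookkeeping that makes these factors collapse into the single spectral-radius term $\rho(K_\text{out})$ and the additive split $\rho(K_\text{out}) \|r_\text{max}\|_2^2 + \|\omega\|_2^2$, rather than a looser condition-number-type constant of $V$.
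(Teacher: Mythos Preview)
Your plan coincides with the paper's up to the point where the random projector has to be averaged: both arguments reduce the \autoref{thm:quadratic} bound to $-\tfrac12\,\E_U\|(I-\mathcal{P}(B_N U))\,v_N\|_2^2$ with $B_N=K_\text{out}K_\text{in}^\top E_N$ invertible and $v_N=K_\text{out}r_N-\omega$ deterministic, expand the projector via idempotence, and finish with a triangle-inequality/spectral-radius bound on $\|v_N\|_2^2$ together with $\|r_N\|_2\le\|r_\text{max}\|_2$.

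The only substantive divergence is precisely the step you flag as the obstacle. The paper does \emph{not} take your SVD detour; it writes in a single line
\[
\E_{U\sim\mathcal{N}(0,I)}\|\mathcal{P}(B_N U)\,v_N\|_2^2 \;=\; \E_{U'\sim\mathcal{N}(0,I)}\|\mathcal{P}(U')\,v_N\|_2^2
\]
and then applies the averaging lemma $\E\|\mathcal{P}(U')v\|_2^2=\tfrac{d_\theta}{d_x}\|v\|_2^2$ directly. In other words, the paper treats $\mathrm{range}(B_N U)$ as Haar-uniform in $\sR^{d_x}$ --- exactly the shortcut you explicitly (and correctly) argue is unavailable for a non-orthogonal $B_N$. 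The ``delicate bookkeeping'' you anticipate, to collapse a $\kappa(V)$-type prefactor down to the bare $\rho(K_\text{out})$, is not supplied in the paper's proof either: the clean constant is obtained by asserting the substitution, not by an additional argument. Your SVD route is the more cautious one, but as you note it naturally produces a condition-number factor of $K_\text{out}K_\text{in}^\top E_N$; you will not find in the paper a separate trick that removes it.
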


\begin{proof}
    The proof relies on the computation of the expected value of the norm of the projection on the image of a matrix for a matrix with random coefficients.
    It is given in full in \autoref{app:proof-main}
\end{proof}
An edge case of \autoref{cor:avgcase} is the situation where $d_x \leq d_\theta$, because the left term cancels.
But this situation is on average equivalent to the overparametrization regime because we go from a $d_\theta$-dimensional space to a $d_x$-dimensional space.
What \autoref{cor:avgcase} tells us is that as we get closer to overparametrization by increasing the dimension of the outer variable $\theta$, the expected loss increase we could get by changing the number of inner iterations gets closer to 0.

We ran an experiment to illustrate \autoref{cor:avgcase} whose results are shown in \autoref{fig:quad-U-surj} (right).
As expected, the lower bound of \autoref{thm:quadratic} does not vary significantly for different numbers of inner iterations.
Moreover, we observe that the lower bound decreases in magnitude as the inner problem is more and more overparametrized in $\theta$.
In cases where the inner or outer problems are not strongly convex, the lower bound can be 0 even before $U$ is surjective.
We show such cases in \autoref{app:add-results}.

In order to understand in which regime fall DEQs and meta-learning we need to understand to which extent the inner problem is close to overparametrization in the outer variable $\theta$.
If one wants to be close to overparametrization on average, this requires an outer variable $\theta$ of dimension $d_z \times n$, where $n$ is the number of samples.
While this number is usually prohibitively large, it is a common assumption in deep learning setups to assume that one can overfit the training data~\citep{li2018learning,du2018gradient,arora2019exact}.
However, in the case of meta-learning, since we are learning on multiple tasks at the same time, it is impossible a priori to overfit all the tasks at the same time since they might have contradicting objectives.
Therefore, we expect DEQs to have a hard time benefiting from more iterations, while there is room for meta-learning to do so.

We stress that these results do not cover the generalization to a test dataset $\mathcal{D}_\text{test}$.
Indeed, the quantity $D(N, \Delta N)$ only monitors the increase in training loss achieved by changing the number of inner iterations.
However, $D(N, \Delta N)$ is a good proxy for the increase in test loss provided a large enough training data set.

\paragraph{Implicit differentiation for affine inner problems}
It can be noted that \autoref{thm:quadratic} considers $\theta^{\star, N}$ as one of the solutions to \eqref{eq:pbp}.
However, most of the time in practice~\citep{baiDeepEquilibriumModels2019,rajeswaran2019meta}, the optimization is actually performed using the approximate implicit differentiation gradients, rather than the true unrolled gradients in order to have a memory-efficient training.
% For practical bilevel optimization with inner loss function $F$
For an inner function $f$, this descent, with constant step size $\alpha_N$ (i.e. independent of $T$), would typically be written as the following:

\begin{equation}
    \label{eq:ift-practical-descent}
    \begin{split}
        &\theta^{T + 1 , N}_\text{IFT} = \theta^{T, N}_\text{IFT} - \alpha_N  p_N(\theta^{T, N}_\text{IFT})\\
        &\text{where } p_N(\theta) = -(\partial_z f(z_N, \theta)^\dagger \partial_{\theta} f(z_N, \theta))^\top \nabla \ell(z_N),
    \end{split}
\end{equation}
where $\dagger$ stands for the pseudo-inverse.
This equation is a practical implementation of \eqref{eq:hypergrad-ift} in the case where the Jacobian is not necessarily invertible, and we use an approximate inner solution $z_N$.
This formula is heuristic and does not necessarily provide a descent direction for arbitrary $N$.
We defer the proofs of the following results to \autoref{app:proof-ift}.

\begin{restatable}[Convergence of the practical IFT gradient descent]{lmm}{iftcvg}
    Under \autoref{ass:fpi}, \autoref{ass:aff-inner} and \autoref{ass:quadratic}, with $\ell$ strongly convex, $\exists N_0$ such that $\forall N > N_0$,  $\exists \alpha_N > 0$ such that the sequences $\theta^{T, N}_\text{IFT}$ converge to a value denoted $\theta^{\star, N}_\text{IFT}$, dependent only on the initialization.
\end{restatable}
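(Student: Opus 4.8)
The plan is to exploit affineness to turn the IFT descent \eqref{eq:ift-practical-descent} into an \emph{affine} fixed-point recursion in $\theta$, and then read off convergence from the spectrum of its iteration matrix. First I would record that, because $f$ is affine (\autoref{ass:aff-inner}), the two Jacobians appearing in $p_N$ are constant: $\partial_z f = K_\text{in}^\top B$ and $\partial_\theta f = K_\text{in}^\top U$. Moreover, by the same computation underlying \autoref{thm:quadratic}, the inner iterate is affine in $\theta$, $z_N(\theta) = K_\text{in}^\top E_N U\,\theta + r_N$, and since $\ell$ is quadratic (\autoref{ass:quadratic}) the gradient $\nabla\ell(z_N(\theta)) = M z_N(\theta) - K_\text{out}^\top\omega$ is affine, with $M \eqdef K_\text{out}^\top K_\text{out}$. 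Substituting into \eqref{eq:ift-practical-descent} gives $p_N(\theta) = G_N\theta + h_N$ with $G_N = -\big((K_\text{in}^\top B)^\dagger K_\text{in}^\top U\big)^\top M\,K_\text{in}^\top E_N U$ and a matching constant $h_N$. Hence the IFT descent is the linear recursion $\theta^{T+1,N}_\text{IFT} = (I - \alpha_N G_N)\theta^{T,N}_\text{IFT} - \alpha_N h_N$.

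Second, I would invoke the standard convergence criterion for such a recursion: for a suitable $\alpha_N>0$ it converges from every initialization to a (generally initialization-dependent) limit provided every nonzero eigenvalue of $G_N$ has positive real part, the eigenvalue $0$ is semisimple, and a fixed point exists, i.e.\ $h_N \in \mathrm{range}(G_N)$; one then takes any $\alpha_N$ below $2\,\mathrm{Re}(\mu)/|\mu|^2$ over the nonzero eigenvalues $\mu$. The algebraic structure helps here: since $A_N = K_\text{in}^\top E_N U$ carries a factor $U$ on the right and $G_N$ carries $U^\top$ on the left, one has $\ker U \subseteq \ker G_N$ and $\mathrm{range}(G_N)\subseteq \mathrm{range}(U^\top)$, while strong convexity of $\ell$ makes $M\succ 0$. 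This yields the $G_N$-invariant orthogonal splitting $\sR^{d_\theta} = \ker U \oplus \mathrm{range}(U^\top)$ on which $G_N$ is block diagonal, vanishing on $\ker U$ — these are exactly the directions left untouched by the recursion, producing the claimed dependence of the limit on the initialization — so it remains to control the block $G_N'$ acting on $\mathrm{range}(U^\top)$.

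Third, I would pass to the limit $N\to\infty$. Since $BK_\text{in}^\top$ has eigenvalues with positive real part, for small $\eta$ one has $E_N \to -(BK_\text{in}^\top)^{-1}$, hence $G_N \to G_\infty$ and $h_N \to h_\infty$. In the regime where the inner Jacobian $\partial_z f = K_\text{in}^\top B$ is invertible (in particular whenever $K_\text{in}$ is invertible, as assumed in \autoref{cor:avgcase}), the pseudo-inverse in \eqref{eq:ift-practical-descent} is a genuine inverse and the IFT Jacobian $-(\partial_z f)^{-1}\partial_\theta f$ coincides with the limiting unrolled Jacobian $A_\infty \eqdef -K_\text{in}^\top (BK_\text{in}^\top)^{-1} U$; consequently $G_\infty = A_\infty^\top M A_\infty$ is symmetric positive semidefinite with kernel exactly $\ker U$. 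Therefore $G_N' \to G_\infty|_{\mathrm{range}(U^\top)} \succ 0$, so $G_N'$ has spectrum with positive real part for all $N \ge N_0$; and $h_N \in \mathrm{range}(U^\top) = \mathrm{range}(G_N)$ for such $N$ (the left factor $U^\top$ places $h_N$ in $\mathrm{range}(U^\top)$, and $G_N'$ is then onto it), giving a fixed point. Choosing $\alpha_N$ as above yields convergence to $\theta^{\star,N}_\text{IFT}$, whose component on $\ker U$ equals that of the initialization.

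The main obstacle is precisely the spectral positivity of the limiting operator $G_\infty$. It is symmetric — so its semidefiniteness is transparent — only because the pseudo-inverse collapses to an inverse and the approximate IFT Jacobian agrees with the true implicit Jacobian $A_\infty$. When $\partial_z f$ is singular, e.g.\ $K_\text{in}$ strictly surjective with $d_x<d_z$, the two Jacobians differ by a term valued in $\ker B$, $G_\infty$ loses symmetry, and a scalar example ($d_z=2$, $d_x=d_\theta=1$) shows its nonzero eigenvalue can become negative, so the heuristic direction $p_N$ becomes an ascent direction and the recursion diverges — consistent with the remark that \eqref{eq:ift-practical-descent} ``does not necessarily provide a descent direction.'' I therefore expect the clean statement to rest on invertibility of the inner Jacobian, and the delicate work is to argue that, away from that regime, the discrepancy $A_\infty + (\partial_z f)^\dagger\partial_\theta f$ stays benign enough to keep the nonzero spectrum of $G_\infty$ in the right half-plane.
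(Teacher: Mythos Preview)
Your reduction to an affine recursion $\theta^{T+1}=(I-\alpha_N G_N)\theta^T-\alpha_N h_N$ and the observation that everything hinges on the spectrum of $G_N$ matches the paper exactly. The divergence is in how spectral positivity is obtained. You argue by continuity: $G_N\to G_\infty=A_\infty^\top M A_\infty\succeq 0$, then restrict to $\mathrm{range}(U^\top)$ where $G_\infty\succ 0$ to protect the eigenvalues that limit to zero. The paper instead writes $X_N=X_{\mathrm{sym}}-X_{N,\mathrm{non\text{-}sym}}$ with $X_{\mathrm{sym}}=(\,\bar H^{-1}U)^\top K_{\mathrm{in}} K_{\mathrm{out}}^\top K_{\mathrm{out}} K_{\mathrm{in}}^\top \bar H^{-1}U$ (this is precisely your $G_\infty$) and $X_{N,\mathrm{non\text{-}sym}}$ carrying the residual $(I-\eta\bar H)^N$, then computes the Rayleigh quotient for an eigenvector $v$: setting $\gamma=K_{\mathrm{in}}^\top\bar H^{-1}Uv$, one gets $\lambda=\|K_{\mathrm{out}}\gamma\|^2-\gamma^\top K_{\mathrm{out}}^\top K_{\mathrm{out}}(I-\eta H)^N\gamma\geq(1-\|(I-\eta H)^N\|_{K_{\mathrm{out}}})\|K_{\mathrm{out}}\gamma\|^2$, which is $\geq 0$ once $N$ is large enough that $\|(I-\eta H)^N\|_{K_{\mathrm{out}}}<1$. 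This buys a concrete $N_0$ and handles the zero eigenvalues automatically (no need for your $\ker U\oplus\mathrm{range}(U^\top)$ splitting or eigenvalue perturbation), at the cost of a slightly more hands-on computation. Your continuity route is conceptually cleaner but softer.

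On your last paragraph: the obstacle you isolate --- that $G_\infty$ is visibly symmetric only after identifying the practical IFT Jacobian $(K_{\mathrm{in}}^\top B)^\dagger K_{\mathrm{in}}^\top U$ with the true implicit Jacobian $K_{\mathrm{in}}^\top(BK_{\mathrm{in}}^\top)^{-1}U$ --- is not avoided by the paper either. The paper passes through the identity $H^\dagger K_{\mathrm{in}}^\top=K_{\mathrm{in}}^\top\bar H^{-1}$ (with $H=K_{\mathrm{in}}^\top B$, $\bar H=BK_{\mathrm{in}}^\top$) without comment when rewriting $p_N$, so your caution about the singular-$\partial_z f$ regime applies equally to both arguments; it is not a defect of your particular route.
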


Therefore, practical optimal solutions, denoted $\theta^{\star, N}_{\text{IFT}}$, are solutions to the following root problem:
\begin{equation}
    \label{eq:ift-practical}
        (\partial_z f(z_N(\theta), \theta)^\dagger \partial_{\theta} f(z_N(\theta), \theta))^\top \nabla \ell(z_N(\theta)) = 0
\end{equation}

\begin{restatable}[Equivalence of IFT and unrolled solutions for affine inner problems]{thm}{iftequivunrolled}
    \label{thm:iftcvg}
    Under \autoref{ass:fpi}, \autoref{ass:aff-inner} and \autoref{ass:quadratic}, with $\ell$ strongly convex and $U$ surjective, we have:
    \begin{equation}
        \theta^{\star, N}_{\text{IFT}} = \theta^{\star, N}
    \end{equation}

\end{restatable}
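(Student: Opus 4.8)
The plan is to realize both $\theta^{\star, N}$ and $\theta^{\star, N}_{\text{IFT}}$ as limits of affine gradient recursions started from a common initialization $\theta^{\mathrm{init}}$, and then to show these recursions have the same stationary set and the same search subspace, which forces them to have the same limit.

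First I would use the closed form underlying \autoref{thm:quadratic}: since $\partial_z f\equiv K_\text{in}^\top B$ and $\partial_\theta f\equiv K_\text{in}^\top U$ are constant under \autoref{ass:fpi} and \autoref{ass:aff-inner}, the iterate is affine, $z_N(\theta)=A_N\theta+r_N$ with $A_N=K_\text{in}^\top E_N U$, so $\theta\mapsto\ell(z_N(\theta))$ is a convex quadratic. Its true (unrolled) gradient is $\nabla_\theta\ell(z_N(\theta))=A_N^\top\nabla\ell(z_N)=U^\top E_N^\top K_\text{in}\,\nabla\ell(z_N)$, whereas the practical IFT direction in \eqref{eq:ift-practical-descent}, writing $J=(K_\text{in}^\top B)^{\dagger}K_\text{in}^\top U$, is $p_N(\theta)=-J^\top\nabla\ell(z_N)=-U^\top K_\text{in}(B^\top K_\text{in})^{\dagger}\,\nabla\ell(z_N)$. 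I take $\theta^{\star, N}$ to be the limit of gradient descent on the quadratic and $\theta^{\star, N}_{\text{IFT}}$ the limit guaranteed by the preceding convergence lemma.

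The key reduction is that surjectivity of $U$ makes $U^\top$ injective, so each direction vanishes iff its inner $\sR^{d_x}$-factor does. The unrolled stationarity becomes $E_N^\top K_\text{in}\nabla\ell(z_N)=0$, which---since $E_N$ is invertible for $N\ge N_0$ (as in \autoref{cor:overparam})---is equivalent to $\nabla\ell(z_N)\in\ker K_\text{in}$; the IFT stationarity becomes $K_\text{in}(B^\top K_\text{in})^{\dagger}\nabla\ell(z_N)=0$. I would prove that these two conditions define the same affine solution set $S\subset\sR^{d_\theta}$ in $\theta$. In parallel I would check that the two recursions live in the same search subspace: both increments lie in $\mathrm{range}(U^\top)$, and in fact $\mathrm{range}(A_N^\top)=\mathrm{range}(J^\top)=\mathrm{range}(U^\top)$, using $\mathrm{range}(K_\text{in}^\top B)=\mathrm{range}(K_\text{in}^\top)$ (both of dimension $d_x$, since $BK_\text{in}^\top$ is invertible and $K_\text{in}$ is surjective). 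Because $\mathrm{range}(U^\top)=(\ker U)^{\perp}$ and $S$ has direction $\ker U$, each recursion converges to the orthogonal projection of the shared $\theta^{\mathrm{init}}$ onto $S$, and these projections coincide, yielding $\theta^{\star, N}_{\text{IFT}}=\theta^{\star, N}$.

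The main obstacle is the equivalence $K_\text{in}(B^\top K_\text{in})^{\dagger}w=0\iff K_\text{in}w=0$ (equivalently, that the IFT and unrolled stationary sets agree and that the IFT set still has direction exactly $\ker U$). This is the one point where surjectivity of $U$ does not suffice and the internal structure of $\partial_z f=K_\text{in}^\top B$ enters: it is immediate in the gradient/symmetric case $B=K_\text{in}$ (which covers iMAML and the setting of \citet{pmlr-v162-vicol22a}), where $(B^\top K_\text{in})^{\dagger}=(K_\text{in}^\top K_\text{in})^{\dagger}$ has range $\mathrm{range}(K_\text{in}^\top)$, on which $K_\text{in}$ is injective, so the displayed kernel collapses to $\ker K_\text{in}$. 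I would establish this via the compact SVD of $K_\text{in}$; once the range and kernel identities are in hand, the remaining facts---that the preconditioned recursion contracts for the step size $\alpha_N$ furnished by the convergence lemma and that its limit is the orthogonal rather than merely oblique projection onto $S$---are routine.
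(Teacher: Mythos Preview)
Your approach differs from the paper's. The paper does not argue via search subspaces and projections; it manipulates the IFT root condition $(H^\dagger K_\text{in}^\top U)^\top\nabla\ell(z_N(\theta))=0$ (with $H=K_\text{in}^\top B$, $\bar H=BK_\text{in}^\top$) algebraically, replaces $H^\dagger K_\text{in}^\top$ by $K_\text{in}^\top\bar H^{-1}$, and then uses surjectivity of $U$, invertibility of $K_\text{out}$ (strong convexity) and of $E_N$ to reduce the condition line by line to the unrolled optimality characterization \eqref{eq:K_grad_l} obtained in the proof of \autoref{thm:quadratic}. Your dynamical picture---same stationary set $S$, same search subspace $\mathrm{range}(U^\top)=(\ker U)^\perp$, hence same orthogonal projection of a shared $\theta^{\mathrm{init}}$ onto $S$---is a legitimate alternative, and your subsidiary claims ($\mathrm{range}(A_N^\top)=\mathrm{range}(J^\top)=\mathrm{range}(U^\top)$; the unrolled stationary set has direction $\ker U$) are correct.

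The obstacle you isolate is genuine and cannot be removed under the stated assumptions alone, so your proof is incomplete as a proof of the theorem as stated. After cancelling the injective $U^\top$, your IFT stationarity is $K_\text{in}(B^\top K_\text{in})^\dagger\nabla\ell(z_N)=0$; since $(B^\top K_\text{in})^\dagger$ has range $\mathrm{range}(K_\text{in}^\top)$ on which $K_\text{in}$ is injective, this is equivalent to $\nabla\ell(z_N)\in\ker(K_\text{in}^\top B)=\ker B$, whereas the unrolled condition is $\nabla\ell(z_N)\in\ker K_\text{in}$. These coincide iff $\ker B=\ker K_\text{in}$, which is \emph{exactly} the condition under which the paper's key identity $H^\dagger K_\text{in}^\top=K_\text{in}^\top\bar H^{-1}$ holds (the right-hand side always solves $Hx=K_\text{in}^\top$, but it is the Moore--Penrose solution only when its columns lie in $\ker(H)^\perp=\mathrm{range}(B^\top)$, i.e.\ when $\mathrm{range}(K_\text{in}^\top)=\mathrm{range}(B^\top)$). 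For a generic $B$ with $\ker B\neq\ker K_\text{in}$---e.g.\ $d_x{=}1$, $d_z{=}2$, $K_\text{in}=(1,0)$, $B=(1,1)$, $K_\text{out}=I$, $U=1$---the unrolled and IFT stationary points are different. So your restriction to the symmetric case $B=K_\text{in}$ is not a defect of your method; it matches what the paper's algebra actually establishes.
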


For overparametrized cases, where $U$ is surjective, this means that \autoref{cor:overparam} is valid for IFT based implicit deep learning.
Therefore, this shows that Implicit Deep Learning trained with IFT is not less prone to I2O than if it is trained through unrolling in this case.
We confirm this empirically in \autoref{sec:stability} for practical cases with DEQs.

%%%%%%%%%%%%%%%%%%%%%%%%%%%%%%%%%%%%%%%%%%%%%%%%%%%%%%%%%%%%%%%%%%
%  Empirical results
%%%%%%%%%%%%%%%%%%%%%%%%%%%%%%%%%%%%%%%%%%%%%%%%%%%%%%%%%%%%%%%%%%

\section{The empirical phenomenon of inner iterations overfitting}
\label{sec:exp-isto}

In order to validate the results from our theoretical analysis in realistic cases, we explore the I2O phenomenon for DEQs and (i)MAML experiments, two settings that highlight the different regimes from our theoretical results.

\paragraph{DEQs}
We conduct experiments on pre-trained DEQs in various successful applications.
The evaluation is unchanged except for the number of inner iterations, which is the same for training and inference in the typical \texttt{deq}\footnote{\href{https://github.com/locuslab/deq}{github.com/locuslab/deq}} library~\citep{baiDeepEquilibriumModels2019}.
The settings we cover are text completion on Wikitext~\citep{baiDeepEquilibriumModels2019,merity2017pointer}, large-scale image classification on ImageNet~\citep{baimultiscalemodels2020,deng2009imagenet},  image segmentation on Cityscapes~\citep{baimultiscalemodels2020,Cordts2016Cityscapes} and optical flow estimation on Sintel~\citep{Bai_2022_CVPR,Butler:ECCV:2012}.
We report the test performance gap in \autoref{fig:deq-inner-opt-time-overf-all}, i.e. the difference between the test performance for $N+ \Delta N$ inner iterations and the test performance for $N$ inner iterations and give more details on the experiments in \autoref{app:exp_details}.
The test performance is always cast as lower is better.

\begin{figure}
    \centering
    \includegraphics{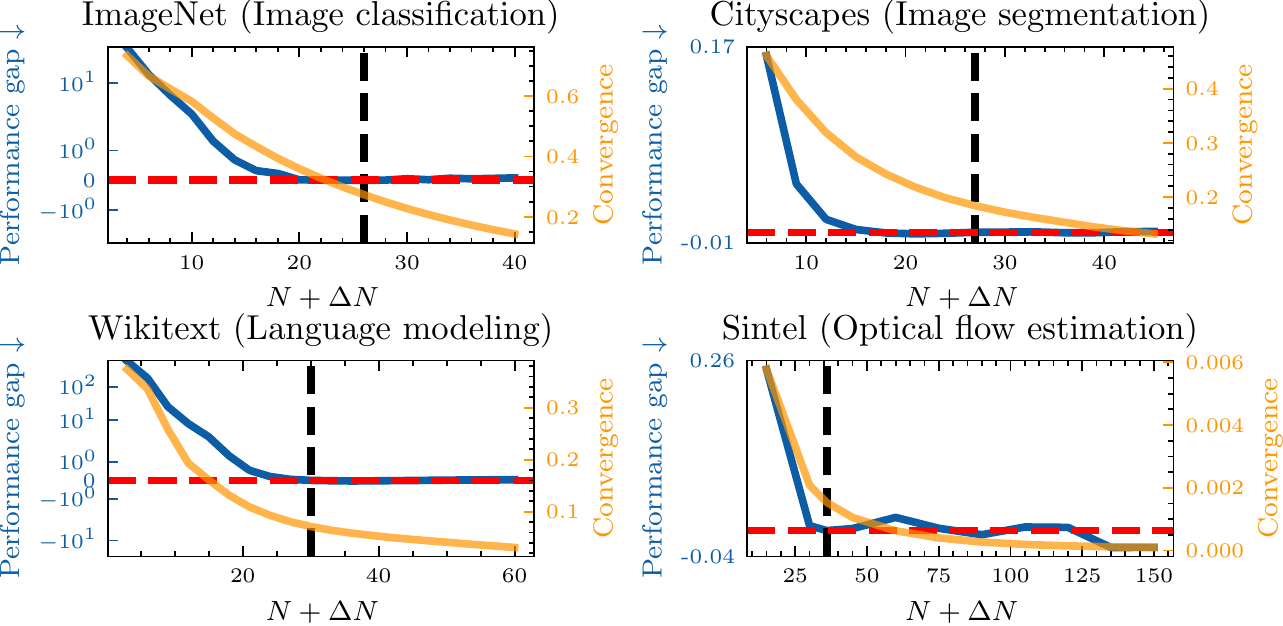}
    \caption{
    \textbf{I2O for DEQs}:
    Test performance gap (lower is better) using $N+\Delta N$ inner iterations at inference compared to using $N$ inner iterations. This performance gap reaches 0 after reaching $N$ the number of iterations used during training. The black dashed line is at $\Delta N = 0$, i.e. the training number of inner iterations. The red dashed line is at 0. For ImageNet, the performance is measured using the top-1 error rate (\%), for Cityscapes it is measured using the negative mean IoU, for WikiText it is measured using the perplexity and for optical flow it is measured using the average EPE.
    Note that after $N$ iterations, getting closer to convergence does not bring a performance benefit.
    }
    \label{fig:deq-inner-opt-time-overf-all}
\end{figure}

The figure shows that for all four cases, the performance of the model does not improve when increasing the number of iterations at test time (\textcolor{xkcdBlue}{blue} lines).
Indeed, once $\Delta N$ becomes positive, the performance tends to plateau.
For the optical flow estimation case, while the performance does get better for a very large number of iterations, there is no clear trend associated with it, since we also see it degrading for a small positive $\Delta N$.
We also observe that in all cases, using fewer iterations at inference is almost always detrimental.
This highlights I2O for DEQs, which are overparametrized (e.g. reaching 90\% training accuracy on ImageNet compared to 80\% test accuracy).

Similar observations can be made on the training performance and on training and test losses, as demonstrated in \autoref{fig:deq-inner-opt-time-overf-imagenet-loss} and \autoref{fig:deq-inner-opt-time-overf-imagenet-train} in appendix, on ImageNet.
This shows that while our theoretical results uncover this phenomenon in simple cases for the training loss only, it is observable for more complex setups, even for the test performance gap.

We verified that these observations were not artefacts of the convergence being already attained or stuck due to some instability by plotting it alongside the performance (\textcolor{xkcdOrange}{orange} lines).
The figures show that increasing the number of iterations above $N$ produces better approximation of the inner problem's solution, while not improving the performances.

Additional experiments for single image super resolution are also presented in \autoref{fig:deq-inner-opt-time-overf-ip} in appendix, with a classical DEQ architecture and the recently introduced ELDER method~\citep{zou2023deep}.
The results of these experiments show similar findings as the ones from \autoref{fig:deq-inner-opt-time-overf-all}.

\paragraph{(i)MAML}
Unlike DEQs, (i)MAML is less prone to I2O.
For example, \citet{chen2020modular} meta-train a network with $N=100$ inner steps, and meta-test it with $N+\Delta N = 10 000$ inner steps.
We run experiments on the synthetic sinusoids regression task introduced by \citet{pmlr-v70-finn17a} and confirm that I2O is much less prevalent in (i)MAML.
\autoref{fig:maml} shows the variation in test performance for the meta-learning task when changing the number of inner iterations.
More details on this experiment are given in \autoref{app:exp_details}.

\begin{figure}
    \centering
    \includegraphics{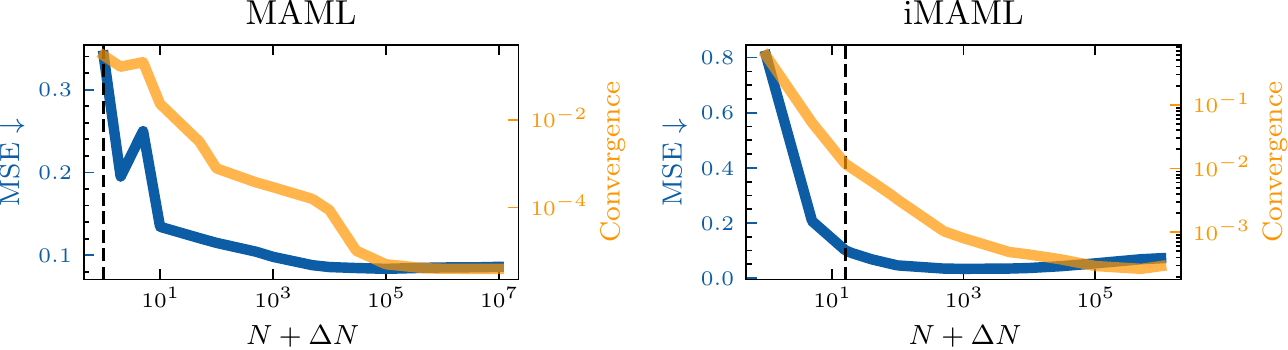}
    \caption{
    \textbf{(i)MAML is less prone to I2O}:
    Test MSE using $N+\Delta N$ inner iterations at inference. The black dashed line is at $\Delta N = 0$, i.e. the training number of inner iterations. Note the log-scale for the x-axis.
    Note that the best MSE is reached before convergence.
    }
    \label{fig:maml}
\end{figure}

We see that while the best performance at test time is achieved for a number of inner iterations much larger than the one used during training, this improvement is bounded from below as predicted by \autoref{thm:quadratic}: the best MSE is reached before convergence.
As for DEQs, reaching convergence does not provide the best performance.

Further, we highlight the effect of overparametrization in the iMAML case.
As the overparametrization is not easy to measure, we use overfitting as a proxy: the better the model is at overfitting, the more overparametrized it is.
\autoref{fig:maml-overfit} shows the training loss increase for various training set sizes for meta-learning, from 1 meta-batch --easy to overfit-- to 25 meta-batches --harder to overfit.
We observe that the better the model is at overfitting the training dataset --i.e. low $\ell(z_N(\theta^{\star, N}))$-- the less it benefits from more iterations at inference --i.e. higher $D(N, \Delta N)$.
Note that we conduct this experiment on the training loss, as the test data is too different from the training one with so few meta-batches.

\begin{figure}
    \centering
    \includegraphics{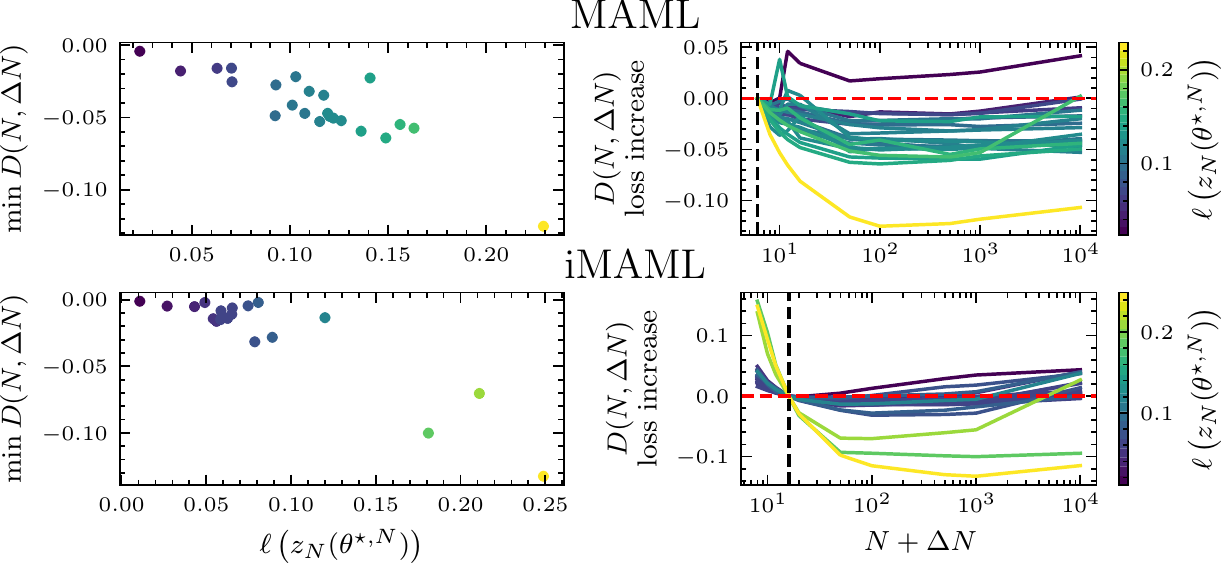}
    \caption{
    \textbf{The impact of overparametrization for (i)MAML}:
    Training loss increase $D(N, \Delta N)$ for different levels of overparametrization.
    This level of overparametrization is measured by the loss on the training set for $N$ inner iterations, $\ell(z_N(\theta^{\star, N}))$.
    Generally, the more meta-batches there are in the training set, the higher this loss.
    The black dashed line is at $\Delta N = 0$, i.e. the training number of inner iterations.The red dashed line is at $D = 0$, i.e. no increase.
    Note that the more overparametrized the model, the higher $D(N, \Delta N)$ is.
    }
    \label{fig:maml-overfit}
\end{figure}

\paragraph{Upwards generalization and path-independence}
Our results might seem to be in contradiction with the empirical findings reported by \citet{anil2022path}.
They study how the test-time performance is affected by an increase in the number of inner iterations at inference, but unlike us consider harder problems.
They correlate the capacity of DEQs to benefit from more test-time inner iterations with a property termed path-independence.
A DEQ is said to be path-independent if for a given couple $\theta, x_i$, there exists only one root of $f(z, \theta, x_i)$.

We highlight that our theoretical analysis is also valid for path-independent DEQs, which also suffer from I2O in-distribution.
Typically, when $K_\text{in}^\top = I$ and we have a fully invertible affine DEQ layer, there is only a single root $z^\star(\theta) = -B^{-1}(U\theta + c)$ (see \autoref{ass:aff-inner}) and this is covered by \autoref{thm:quadratic}.
However, in our experiments, we consider in-distribution generalization, rather than upwards generalization, i.e., an out-of-distribution setting where an explicit difficulty parameter is set higher at test-time than during training, which explains why our results and those of \citet{anil2022path} are not in contradiction.

%%%%%%%%%%%%%%%%%%%%%%%%%%%%%%%%%%%%%%%%%%%%%%%%%%%%%%%%%%%%%%%%%
% Stability
%%%%%%%%%%%%%%%%%%%%%%%%%%%%%%%%%%%%%%%%%%%%%%%%%%%%%%%%%%%%%%%%%

\section{Robustness of the networks obtained with IFT gradient descent to I2O}
\label{sec:stability}

\autoref{thm:iftcvg} shows that the way the hypergradient is computed for implicit deep learning, IFT or unrolling, does not impact whether it suffers from I2O in our simplified setting.
Still, one might wonder whether the effect of I2O is stronger for one or the other in practical cases.
\citet{gilton2021deep} suggest that I2O is much more prevalent for unrolling, highlighting a huge drop in performance when more iterations are used during inference.
However, in their experiment, the network trained with unrolling has an effective depth much smaller than its IFT-trained counterpart --10 fixed-point iterations vs 50 Anderson acceleration iterations.
Indeed, because IFT gradient descent enables a memory-free training, it is possible to train networks with a very large effective depth, while it is harder for unrolled networks.

\begin{figure}
    \centering
    \includegraphics{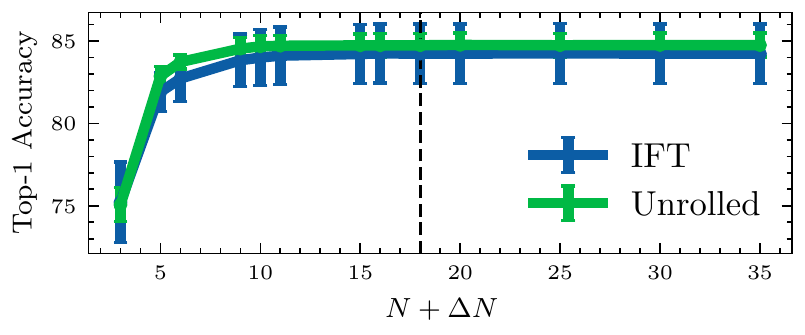}
    \caption{\textbf{Stability of unrolled/IFT trained networks}: We compare how stable networks trained with either \textcolor{xkcdMediumGreen}{unrolling} or \textcolor{xkcdBlue}{IFT} are to the choice of the number of inner iterations at inference. The task is image classification on CIFAR-10. The networks were trained with 18 iterations.
    Note that the IFT trained network does not appear more stable than its unrolled counterpart.
    }
    \label{fig:stability}
\end{figure}

We tried to test whether this conclusion still holds when training networks with unrolling and IFT with the same depth.
In our experiment, both networks use Broyden's method to solve the inner problem.
Therefore, for the unrolled network, we backpropagate through the Broyden iterates which has a high memory requirement.
For this reason, we can only train relatively small networks for image classification on CIFAR-10~\citep{Krizhevsky09learningmultiple}.
In~\autoref{fig:stability} we compare the stability of the two networks for different number of inner iterations averaging performance over 10 seeds.
We observe that there is no gain of stability when using IFT gradient descent over unrolling.

%%%%%%%%%%%%%%%%%%%%%%%%%%%%%%%%%%%%%%%%%%%%%%%%%%%%%%%%%%%%%%%%%
% Conclusion
%%%%%%%%%%%%%%%%%%%%%%%%%%%%%%%%%%%%%%%%%%%%%%%%%%%%%%%%%%%%%%%%%

\section{Conclusion}
\label{sec:ccl}

In this work, we challenge one common assumption about DEQs: the possibility to select their computational budget at inference.
We showed that not only do DEQs exhibit a phenomenon we termed inner iterations overfitting~(I2O), that we proved to be grounded in theory for simple models, they also do not appear to have more stability than unrolled networks.
We highlight that this does not mean that DEQs should not be used: the $O(1)$ memory requirement during training is a strong point which enables the training of very deep networks which are not trainable with unrolling.

One big challenge that remains is to understand the tools that could be used to study nonlinear DEQs.
Indeed, the approximation of a nonlinear DEQ with an affine one in the first order is difficult to study because of the integration of errors when solving the fixed point.
Finally, on a more practical note, since we noted that eventually one wants to use DEQs with a fixed number of iterations, a question that remains is: can we bias the hypergradient so that it does take into account the number of iterations made in a more direct way, somehow making it more similar to the unrolled gradient?

\section*{Acknowledgments}
The work of G. Peyré and Z. Ramzi was supported by the European Research Council (ERC project NORIA) and the French government under management of Agence Nationale de la Recherche as part of the ``Investissements d’avenir'' program, reference ANR19-P3IA-0001 (PRAIRIE 3IA Institute).
This work was performed using HPC resources from GENCI–IDRIS (Grant 2022-AD011013570).
We thank Elias Ramzi, Karl Hajjar, Zhenzhang Ye and Matthieu Terris for their valuable feedbacks on a draft of this work.
We also thank Francisco Andrade and Rapha\"{e}l Barboni for fruitful mathematical discussions.
{
\setlength\bibitemsep{3\itemsep}  % length between references item
\small
\printbibliography
}

\newpage
\appendix

\def\thefigure{\thesection.\arabic{figure}}
\counterwithin{figure}{section}

%%%%%%%%%%%%%%%%%%%%%%%%%%%%%%%%%%%%%%%%%%%%%%%%%%%%%%%%%%%%%%%%%%%%%%%%%%%%%%%
%%%%%%%%%%%%%%%%%%%%%%%%%%%%%%%%%%%%%%%%%%%%%%%%%%%%%%%%%%%%%%%%%%%%%%%%%%%%%%%

\section{Proof of main result and corollary}
\label{app:proof-main}
In order to prove the main result, we will rely on the notion of time-invertible linear procedures.
\begin{definition}{(Time-invertible linear procedure)}
    A time-invertible linear procedure is a sequence $z_N(\theta)$ such that there exists $N_0 < \infty, K_\text{in} \in \sR^{d_x \times d_z}, U \in \sR^{d_x \times d_\theta}, E_N \in \sR^{d_x \times d_x}, r_N \in \sR^{d_z}$ such that it can be written as:
    \begin{equation}
        \label{eq:time-invertible-proc}
        z_N(\theta) = K_\text{in}^\top E_N U \theta + r_N,
    \end{equation}
    and $\forall N \geq N_0$, where $E_N$ is invertible.
\end{definition}

Let us now move on to the main components of the proof.
We need first to get an expression for the iterates of the fixed-point iterations for an affine function.
\begin{lemma}
    \label{lem:fpi-it}
    Let us assume $f(z) = K^\top(Bz+ c)$, with $K, B \in \sR^{d_x \times d_x}$ and $c \in \sR^{dx}$.
    We further assume that $BK^\top$ has eigenvalues with positive real part and that $K$ is surjective.
    Then the iterates of the fixed-point iteration method with fixed step size $\eta$ have the following expression:
    \begin{equation}
        z_N = K^\top \left(\left(I - \eta B K^\top \right)^N -I \right)  (B K^\top)^{-1} (Bz_0 + c) + z_0,
    \end{equation}
    with $z_0$ the initialization of the procedure.
\end{lemma}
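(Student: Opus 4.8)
The plan is to prove the closed form by induction on $N$, treating the fixed-point iteration as an affine-linear recurrence. Expanding \eqref{eq:fpi} with $f(z) = K^\top(Bz + c)$ gives $z_{N+1} = (I - \eta K^\top B)z_N - \eta K^\top c$, an affine recurrence of the form $z_{N+1} = M z_N + b$ with $M = I - \eta K^\top B$ and $b = -\eta K^\top c$. First I would record the invertibility facts needed for the statement to even make sense: since $K$ and $B$ are square and $BK^\top$ has eigenvalues with positive real part, $BK^\top$ is invertible, and because $K$ is surjective (hence invertible) this forces $B$ to be invertible as well, so that $(BK^\top)^{-1}$ in the claimed formula is well-defined.

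For the induction, write $w_N = K^\top\big((I - \eta BK^\top)^N - I\big)(BK^\top)^{-1}(Bz_0 + c)$, so that the claim reads $z_N = w_N + z_0$. The base case $N = 0$ is immediate, since $(I - \eta BK^\top)^0 - I = 0$ gives $z_0 = z_0$. For the inductive step I would substitute $z_N = w_N + z_0$ into $z_{N+1} = (I - \eta K^\top B)z_N - \eta K^\top c$ and simplify. The engine of the computation is the commutation identity $(I - \eta K^\top B)K^\top = K^\top(I - \eta BK^\top)$, which follows by distributing $K^\top B K^\top$ and which lets me \emph{push} the factor $(I - \eta K^\top B)$ through the leading $K^\top$ in $w_N$, raising the power of $(I - \eta BK^\top)$ from $N$ to $N+1$. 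The remaining affine piece $(I - \eta K^\top B)z_0 - \eta K^\top c = z_0 - \eta K^\top(Bz_0 + c)$ I would rewrite as $z_0 + K^\top(-\eta BK^\top)(BK^\top)^{-1}(Bz_0 + c)$ using $BK^\top (BK^\top)^{-1} = I$, so that it combines with the pushed-through term. Collecting the two contributions, the inner bracket becomes $(I - \eta BK^\top)^{N+1} - (I - \eta BK^\top) - \eta BK^\top = (I - \eta BK^\top)^{N+1} - I$, which is exactly the bracket defining $w_{N+1}$, giving $z_{N+1} = w_{N+1} + z_0$ and closing the induction.

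The main obstacle is precisely the ordering mismatch between $K^\top B$, which appears naturally in the recurrence, and $BK^\top$, which appears in the target formula; the identity $K^\top(BK^\top)^n = (K^\top B)^n K^\top$ is what reconciles them, and keeping track of which side of $K^\top$ each matrix sits on is the only real bookkeeping hazard. As an alternative to induction, I could solve the recurrence in closed form via the geometric series $z_N = M^N z_0 + (M^N - I)(M - I)^{-1}b$, which is valid because $M - I = -\eta K^\top B$ is invertible, and then convert the result using $(K^\top B)^{-1}K^\top = B^{-1}$ together with the same commutation identity; this yields the compact form $z_N = M^N(z_0 + B^{-1}c) - B^{-1}c$, which one checks equals the claimed expression. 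Either route is routine once the commutation identity is in hand, so I would present the induction for directness.
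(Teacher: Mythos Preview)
Your induction argument is correct and self-contained. The paper takes a somewhat different route: instead of verifying the closed form directly, it first proves by induction that $z_N - z_0 \in \mathrm{range}(K^\top)$ for every $N$, writes $z_N - z_0 = K^\top y_N$, and then extracts a standalone affine recurrence $y_{N+1} = (I - \eta BK^\top)y_N - \eta(Bz_0 + c)$ living entirely in $\sR^{d_x}$, which it solves by the usual geometric sum. Your approach is a verification that starts from the target formula; the paper's is a derivation that produces it. The substantive algebra---the commutation identity $(I - \eta K^\top B)K^\top = K^\top(I - \eta BK^\top)$---is the same in both, except that in the paper it appears implicitly when passing from the $z$-recurrence to the $y$-recurrence rather than being invoked explicitly. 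One practical difference worth noting: both the paper's argument and your induction require only that $BK^\top$ be invertible, whereas your alternative geometric-series route needs $M - I = -\eta K^\top B$ to be invertible. As literally stated the lemma has $K,B$ square so this is harmless, but the lemma is later applied (Lemma~\ref{lem:tilp}) with $K = K_\text{in} \in \sR^{d_x \times d_z}$ possibly non-square and $K^\top B$ rank-deficient; there only your induction and the paper's change-of-variables argument go through, so you should present the induction as the primary proof and drop or caveat the alternative.
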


\begin{proof}
    We have:
    \begin{align}
         z_{N+1} &= z_N - \eta K^\top(Bz_N + c)\\
         &= (I - \eta K^\top B)z_N - \eta K^\top c
    \end{align}
    First let us check that $z_N - z_0$ is in the range of $K^\top$ for all $N$.
    We proceed by recurrence starting with $N=0$, which is obviously true because $z_0 - z_0 = 0$ is the range of $K^\top$.
    If $z_N - z_0$ is in the range of $K^\top$, there exists $x$ such that $z_N - z_0 = K^\top x$.
    Then we have \begin{align}
        z_{N+1} &= z_N - \eta K^\top B z_N - \eta K^\top c \\
        z_{N+1} - z_0 &= z_N - z_0 - \eta K^\top B z_N - \eta K^\top c \\
        z_{N+1} - z_0 &= K^\top x - \eta K^\top B z_N - \eta K^\top c \\
        z_{N+1} - z_0 &= K^\top \left(x - \eta  B z_N - \eta  c \right)
    \end{align}
    Therefore $z_{N+1}$ is also in the range of $K^\top$ and we conclude that $z_N - z_0$ is in the range of $K^\top$ for all $N$.
    We then introduce $y_N$ such that $z_N - z_0 = K^\top y_N$.
    The following recurrence then holds:
    \begin{align}
        z_{N+1} - z_0 &= z_N - z_0 - \eta K^\top B z_N - \eta K^\top c \\
        K^\top y_{N+1} &= K^\top y_N - \eta K^\top B z_N - \eta K^\top c \\
        K^\top y_{N+1} &= K^\top \left(y_N - \eta B z_N - \eta  c \right) \\
        y_{N+1} &= y_N - \eta B z_N - \eta  c  \\
        y_{N+1} &= y_N - \eta B (z_N - z_0) - \eta B z_0 - \eta  c  \\
        y_{N+1} &= y_N - \eta B K^\top y_N - \eta B z_0 - \eta  c  \\
        y_{N+1} &= \left(I - \eta B K^\top \right) y_N - \eta B z_0 - \eta  c
    \end{align}
    The expression of $y_N$ is then for $y_0 = 0$:
    \begin{align}
        y_N &= -\left(I - \eta B K^\top \right)^N  (B K^\top)^{-1} (Bz_0 - c) - (B K^\top)^{-1} (Bz_0 + c)\\
        &= \left(\left(I - \eta B K^\top \right)^N -I \right)  (B K^\top)^{-1} (Bz_0 + c)
    \end{align}
    Therefore the expression of $z_N$ is:
    \begin{align}
        z_N - z_0  &= K^\top \left(\left(I - \eta B K^\top \right)^N -I \right)  (B K^\top)^{-1} (Bz_0 + c) \\
        z_N &= K^\top \left(\left(I - \eta B K^\top \right)^N -I \right)  (B K^\top)^{-1} (Bz_0 + c) + z_0
    \end{align}
    % where the last line comes from using the kernel trick on $(K^\top B)K^\top = K^\top (BK^\top)$.
\end{proof}

\begin{lemma}
\label{lem:tilp}
    Under \autoref{ass:fpi} and \autoref{ass:aff-inner}, the inner procedure (i.e. the fixed-point iteration method) is a time-invertible linear procedure with $E_N = \left((I - \eta B K_\text{in}^\top )^N - I\right) (B K_\text{in}^\top )^{-1}$ and $r_N = K_\text{in}^\top E_N  (Bz_0 + c) + z_0$, with $z_0$ the initialization of the procedure.
\end{lemma}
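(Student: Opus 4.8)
The plan is to reduce the statement to \autoref{lem:fpi-it} by freezing $\theta$ and absorbing the $\theta$-dependent part of $f$ into the constant term, then to read off $E_N$ and $r_N$ from the resulting closed form, and finally to verify the invertibility of $E_N$ as a separate spectral step.

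First, for a fixed $\theta$, \autoref{ass:aff-inner} writes the inner map as $f(z,\theta) = K_\text{in}^\top\bigl(Bz + (U\theta + c)\bigr)$, which is exactly the affine form treated in \autoref{lem:fpi-it} with the constant $c$ replaced by $c_\theta := U\theta + c$. The hypotheses of that lemma hold under \autoref{ass:aff-inner}: $K_\text{in}$ is surjective, and $BK_\text{in}^\top$ has eigenvalues with positive real part and is therefore invertible. I would note that although \autoref{lem:fpi-it} is stated for square $K,B$, its proof only uses surjectivity of $K$ (so that $z_N - z_0 \in \mathrm{range}(K^\top)$ has a unique preimage $y_N$) and invertibility of $BK^\top$; it thus applies verbatim to the rectangular matrices here. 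Applying it with $c_\theta$ gives
\[
z_N(\theta) = K_\text{in}^\top\left((I-\eta BK_\text{in}^\top)^N - I\right)(BK_\text{in}^\top)^{-1}(Bz_0 + U\theta + c) + z_0.
\]

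Next I would introduce $E_N := \left((I-\eta BK_\text{in}^\top)^N - I\right)(BK_\text{in}^\top)^{-1}$ and split the inner affine term into its $\theta$-linear and $\theta$-independent parts:
\[
z_N(\theta) = K_\text{in}^\top E_N U\,\theta + \underbrace{\left(K_\text{in}^\top E_N(Bz_0 + c) + z_0\right)}_{=:\ r_N}.
\]
This is precisely the form $z_N(\theta) = K_\text{in}^\top E_N U\theta + r_N$ demanded by the definition of a time-invertible linear procedure, with the exact $E_N$ and $r_N$ claimed in the statement; a quick dimension check confirms $E_N \in \sR^{d_x\times d_x}$ and $r_N \in \sR^{d_z}$, matching the definition.

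The only genuinely substantive step, and the one I expect to be the main obstacle, is producing a finite $N_0$ past which $E_N$ is invertible. Since $(BK_\text{in}^\top)^{-1}$ is invertible, this reduces to the invertibility of $M^N - I$ with $M := I - \eta BK_\text{in}^\top$. Here I would invoke that the fixed-point iteration converges under \autoref{ass:aff-inner} (cf. \autoref{app:affine-inner}), so $\eta$ is chosen small enough that the spectral radius of $M$ is strictly less than one. Consequently, for every $N \geq 1$ and every eigenvalue $\mu$ of $M$ we have $|\mu^N| = |\mu|^N < 1$, so $1$ is never an eigenvalue of $M^N$, i.e. $M^N - I$ is invertible. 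Hence $E_N$ is invertible for all $N \geq 1$, and one may take $N_0 = 1$, completing the verification that the procedure is time-invertible linear.
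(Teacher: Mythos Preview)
Your proposal is correct and follows essentially the same route as the paper: apply \autoref{lem:fpi-it} with the $\theta$-dependent constant $c_\theta = U\theta + c$, then split into the $\theta$-linear piece $K_\text{in}^\top E_N U\theta$ and the remainder $r_N$. The only notable difference is your invertibility argument, which is slightly sharper: the paper observes that $(I-\eta BK_\text{in}^\top)^N \to 0$ so $(I-\eta BK_\text{in}^\top)^N - I \to -I$ is eventually invertible, whereas you argue directly that $\rho(M)<1$ forces $1 \notin \mathrm{spec}(M^N)$ for every $N\geq 1$, yielding $N_0=1$; your added remark about the rectangular-vs-square mismatch in \autoref{lem:fpi-it} is also a welcome clarification the paper omits.
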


\begin{proof}
    Using \autoref{lem:fpi-it}, we have the expression of $z_N(\theta)$:
    \begin{align}
        z_N(\theta) &= K_\text{in}^\top \left((I - \eta B K_\text{in}^\top )^N - I\right) (B K_\text{in}^\top )^\dagger  (U \theta + c + Bz_0) +  z_0\\
        &= K_\text{in}^\top \left((I - \eta B K_\text{in}^\top )^N - I\right) (B K_\text{in}^\top )^\dagger  U \theta \\&\hphantom{=}\enspace+  K_\text{in}^\top \left((I - \eta B K_\text{in}^\top )^N - I\right) (B K_\text{in}^\top )^\dagger (c + Bz_0) + z_0\\
        &= K_\text{in}^\top E_N U \theta + r_N,
    \end{align}
    with $E_N = \left((I - \eta B K_\text{in}^\top )^N - I\right) (B K_\text{in}^\top )^{-1}$ and $r_N = K_\text{in}^\top E_N  (c + Bz_0) + z_0$.
    We now need to prove that $E_N$ is invertible for $N$ sufficiently large.
    Since the fixed-point iterations converge, $(I - \eta B K_\text{in}^\top )^N $ goes to 0, and therefore $\left((I - \eta B K_\text{in}^\top )^N - I\right)$ goes to $-I$ which means that for $N$ sufficiently large the latter is invertible.
    We conclude by noticing that $E_N$ is invertible as the product of two invertible matrices.
\end{proof}

We restate the main theorem here for convenience, before proving it.
\quadratic*

\begin{proof}
    We prove here the result directly for a time-invertible linear procedure thanks to \autoref{lem:tilp}:
\begin{equation}
    z_N(\theta) = K_\text{in}^\top E_N U \theta + r_N = A_N \theta + r_N, \text{ with } A_N = K_\text{in}^\top E_N U 
\end{equation}
We can then write:
\begin{align}
    \ell(z_N(\theta)) &= \frac12 \|K_\text{out} z_N(\theta) - \omega\|_2^2 \\
    &= \frac12 \|K_\text{out} A_N \theta + K_\text{out} r_N - \omega\|_2^2 \\
    &= \frac12 \|K_\text{out} A_N \theta + \mathcal{P}(K_\text{out} A_N ) (K_\text{out} r_N - \omega)\|_2^2 + \frac12 \| \mathcal{P}(\left(K_\text{out} A_N \right)^\perp) (K_\text{out} r_N - \omega) \|_2^2
\end{align}
where the last equation holds because the two terms are orthogonal.
And we have:
\begin{align}
    \label{eq:K_grad_l}
    &K_\text{out} K_\text{in}^\top E_N U \theta^{\star, N} =  -\mathcal{P}(K_\text{out} A_N ) (K_\text{out} r_N - \omega)\\
    & E_N U \theta^{\star, N} = -(K_\text{out} K_\text{in}^\top)^\dagger  \mathcal{P}(K_\text{out} A_N ) (K_\text{out} r_N - \omega) + \theta_{\ker(K_\text{out} K_\text{in}^\top)} \\
    & U \theta^{\star, N} = -E_N^{-1}(K_\text{out} K_\text{in}^\top)^\dagger  \mathcal{P}(K_\text{out} A_N ) (K_\text{out} r_N - \omega) + E_N^{-1}\theta_{\ker(K_\text{out} K_\text{in}^\top)}
\end{align}

Plugging that into the outer loss for a different inner iterations number $N' = N + \Delta N$ and using $C = K_\text{out} K_\text{in}^\top$, $M(N', N) = E_{N'} E_N^{-1}$ and $v_N = (K_\text{out} r_N - \omega)$:
\begin{align}
    \ell(z_{N'}(\theta^{\star, N})) &= \frac12 \|C E_{N'} U \theta^{\star, N} + K_\text{out} r_{N'} - \omega\|_2^2 \\
    &= \frac12 \|- C M(N', N) C^\dagger  \mathcal{P}(K_\text{out} A_N ) v_N + v_{N'} + C M(N', N) \theta_{\ker(C)}\|_2^2 \\
    &= \frac12 \|- C M(N', N) C^\dagger  \mathcal{P}(K_\text{out} A_N ) v_N + \mathcal{P}(C) v_{N'} + C M(N', N) \theta_{\ker(C)}\|_2^2 \\
    &\hphantom{=}\enspace+ \frac12 \|\mathcal{P}(C^\perp) v_{N'}\|_2^2
\end{align}

For $N' = N$, i.e. $\Delta N=0$, since $M(N', N) = I$, we have:

\begin{align}
    \ell(z_N(\theta^{\star, N})) &= \frac12 \|- C  C^\dagger  \mathcal{P}(K_\text{out} A_N ) v_N + \mathcal{P}(C) v_N + C  \theta_{\ker(C)}\|_2^2 + \frac12 \|\mathcal{P}(C^\perp) v_N\|_2^2 \\
    &= \frac12 \|- C  C^\dagger  \mathcal{P}(K_\text{out} A_N ) v_N + \mathcal{P}(C) v_N\|_2^2 + \frac12 \|\mathcal{P}(C^\perp) v_N\|_2^2 \\
    &= \frac12 \|- \mathcal{P}(C)  \mathcal{P}(C E_N U ) v_N + \mathcal{P}(C) v_N\|_2^2 + \frac12 \|\mathcal{P}(C^\perp) v_N\|_2^2 \\
    &= \frac12 \|- \mathcal{P}(C E_N U ) v_N + \mathcal{P}(C) v_N\|_2^2 + \frac12 \|\mathcal{P}(C^\perp) v_N\|_2^2 \\
    &= \frac12 \|\left(\mathcal{P}(C)- \mathcal{P}(C E_N U )\right) v_N \|_2^2 + \frac12 \|\mathcal{P}(C^\perp) v_N\|_2^2
\end{align}
The above expressions are equalities involving complicated terms. In order to have a simpler formula, we get a lower bound via the following:
\begin{align}
    D(N, \Delta N) &= \ell(z_{N + \Delta N}(\theta^{\star, N})) - \ell(z_N(\theta^{\star, N})) \\
    & \geq - \frac12 \|\left(\mathcal{P}(C)- \mathcal{P}(C E_N U )\right) v_N \|_2^2\\
    & \geq - \frac12 \|\left(\mathcal{P}(K_\text{out} K_\text{in}^\top)- \mathcal{P}(K_\text{out} K_\text{in}^\top E_N U )\right) (K_\text{out} r_N - \omega) \|_2^2
\end{align}
\end{proof}

\begin{lemma}
    For $v \in \sR^{p}$, and $\mathcal{P}(X)$ the orthogonal projection onto $\mathrm{range}(X)$ for $X \in \sR^{p \times d}$, we have
    \begin{equation}
        \E_{X | X_{ij} \sim \mathcal{N}(0, 1)} \|\mathcal{P}(X)v\|_2^2 = \frac{d}{p} \|v\|_2^2.
    \end{equation}
\end{lemma}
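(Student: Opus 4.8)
The plan is to reduce the computation of $\E\|\mathcal{P}(X)v\|_2^2$ to determining the single matrix $\E[\mathcal{P}(X)]$, and then to pin down that matrix using the rotational invariance of the i.i.d.\ Gaussian ensemble. Since $\mathcal{P}(X)$ is an orthogonal projector it is symmetric and idempotent, so $\|\mathcal{P}(X)v\|_2^2 = v^\top \mathcal{P}(X)^\top \mathcal{P}(X) v = v^\top \mathcal{P}(X) v$, and by linearity of expectation $\E\|\mathcal{P}(X)v\|_2^2 = v^\top \E[\mathcal{P}(X)] v$. It therefore suffices to compute $M \eqdef \E[\mathcal{P}(X)] \in \sR^{p\times p}$.

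To identify $M$, I would use that for any orthogonal $Q \in \sR^{p \times p}$ the matrix $QX$ has the same law as $X$: each column of $X$ is an isotropic Gaussian vector in $\sR^p$ whose distribution is invariant under $Q$, and the columns are independent. Because $\mathrm{range}(QX) = Q\,\mathrm{range}(X)$, the associated projector satisfies $\mathcal{P}(QX) = Q\,\mathcal{P}(X)\,Q^\top$. Taking expectations gives $M = Q M Q^\top$ for every orthogonal $Q$, i.e.\ $M$ commutes with the full orthogonal group; an elementary argument (or Schur's lemma) then forces $M = c\,I$ for some scalar $c$.

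The constant $c$ is found by taking the trace. In the regime $d \le p$ for which the claimed value is meaningful (a projector cannot increase norms, so the answer must be at most $\|v\|_2^2$), the matrix $X$ has full column rank almost surely, so $\mathcal{P}(X)$ is almost surely a rank-$d$ projector with $\Tr\mathcal{P}(X) = d$. Hence $cp = \Tr M = \E[\Tr \mathcal{P}(X)] = d$, giving $c = d/p$, and therefore $\E\|\mathcal{P}(X)v\|_2^2 = \tfrac{d}{p}\,v^\top v = \tfrac{d}{p}\|v\|_2^2$.

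I do not expect a serious obstacle here; the calculation is clean once the invariance is invoked. The only two points that need care are the almost-sure full-rank claim underlying $\Tr\mathcal{P}(X) = d$ (which fails when $d > p$, where instead $\mathcal{P}(X) = I$ almost surely and the answer becomes $\|v\|_2^2$, so that in general $\Tr\mathcal{P}(X) = \min(d,p)$), and the deduction that a matrix commuting with every rotation must be a scalar multiple of the identity, which I would justify by applying the commutation relation to reflections and to coordinate rotations to force the off-diagonal entries to vanish and the diagonal entries to coincide.
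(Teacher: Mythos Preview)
Your proposal is correct and follows essentially the same approach as the paper: both arguments exploit the orthogonal invariance of the i.i.d.\ Gaussian ensemble to reduce the expectation to a scalar multiple of the identity, and then fix that scalar via the trace $\Tr\mathcal{P}(X)=d$. Your formulation is in fact a bit more streamlined than the paper's (which passes through an SVD and specializes $v$ to a coordinate vector), but the underlying idea is identical.
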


\begin{proof}
    The proof is taken from \citet{projexp}.
Let $X = UDV^\top$ a SVD of $X$, with $U\in\sR^{d\times d}$ orthogonal, $D$ diagonal with positive entries, and $V\in \sR^{p\times d}$ orthogonal, i.e. such that $V^\top V = I_d$.
Since the entries of $X$ are drawn i.i.d. from a normal distribution, the singular values of $X$ are non-zeros with probability $1$, and we have $\mathcal{P}(X) = X^\top(XX^\top)^{-1}X = VV^{\top}$.
Since the distribution of $V$ is right invariant, the distribution of $\|\mathcal{P}(X)v\|_2^2$
 depends only on $\|v\|_2^2$.\footnote{Right invariance means, for each fixed $p\times p$ orthogonal matrix $B$, the matrix $VB$ is distributed the same way $V$ is. It is a consequence of the rotational symmetry of the original normal distribution.}
 Thus, $\E_{X | X_{ij} \sim \mathcal{N}(0, 1)} \|\mathcal{P}(X)v\|_2^2 = \|v\|_2^2 \E_{X | X_{ij} \sim \mathcal{N}(0, 1)} \mathcal{P}(X)_{11}$, taking $v = (\|v\|, 0, \ldots, 0)$ (i.e. we chose a basis where the first vector is 0, and if $v$ is 0 the problem is trivial).
 By symmetry, we have $\E_{X | X_{ij} \sim \mathcal{N}(0, 1)} \mathcal{P}(X)_{11} = \E_{X | X_{ij} \sim \mathcal{N}(0, 1)} \operatorname{tr}(\mathcal{P}(X)) / p$.
 And we have $\operatorname{tr}(\mathcal{P}(X)) = \operatorname{tr}(V V^\top ) = \operatorname{tr}(V^\top V) = d$.
\end{proof}

\avgcase*

\begin{proof}
From \autoref{thm:quadratic}, we have for $N \geq N_0$:
\begin{equation}
    D(N, \Delta N) \geq -\frac12 \|\left(\mathcal{P}(K_\text{out} K_\text{in}^\top) - \mathcal{P}(K_\text{out} K_\text{in}^\top E_N U)\right) (K_\text{out} r_N - \omega)\|_2^2
\end{equation}

When the outer problem is strongly convex, we have $K_\text{out}$ is an invertible matrix.
Therefore, $\mathcal{P}(K_\text{out} K_\text{in}^\top) = I$, and $K_\text{out} K_\text{in}^\top E_N = B_N$ is invertible for $N \geq N_0$.
Using the notation $v_N = (K_\text{out} r_N - \omega)$, if $d_x<d_\theta$, we have:
\begin{align}
    \E_{U \sim \mathcal{N}(0, I)} D(N, \Delta N) &\geq -\frac12 \E_{U \sim \mathcal{N}(0, I)} \|\left(\mathcal{P}(K_\text{out} K_\text{in}^\top) - \mathcal{P}(K_\text{out} K_\text{in}^\top E_N U)\right) (K_\text{out} r_N - \omega)\|_2^2\\
    &\geq -\frac12 \E_{U \sim \mathcal{N}(0, I)} \|\left(I - \mathcal{P}(B_N U)\right) v_N \|_2^2\\
    &\geq -\frac12 \E_{U \sim \mathcal{N}(0, I)} v_N^\top \left(I - \mathcal{P}(B_N U)\right)^\top \left(I - \mathcal{P}(B_N U)\right) v_N\\
    &\geq -\frac12 \E_{U \sim \mathcal{N}(0, I)} v_N^\top \left(I - \mathcal{P}(B_N U)\right) \left(I - \mathcal{P}(B_N U)\right) v_N\\
    &\geq -\frac12 \E_{U \sim \mathcal{N}(0, I)} v_N^\top \left(I - \mathcal{P}(B_N U)\right) v_N\\
    &\geq -\frac12 \E_{U \sim \mathcal{N}(0, I)} v_N^\top v_N - v_N^\top \mathcal{P}(B_N U) v_N\\
    &\geq -\frac12 \E_{U \sim \mathcal{N}(0, I)} \|v_N\|_2^2 - v_N^\top \mathcal{P}(B_N U) v_N\\
    &\geq -\frac12 \E_{U \sim \mathcal{N}(0, I)} \|v_N\|_2^2 - v_N^\top \mathcal{P}(B_N U) \mathcal{P}(B_N U) v_N\\
    &\geq -\frac12 \E_{U \sim \mathcal{N}(0, I)} \|v_N\|_2^2 - v_N^\top \mathcal{P}(B_N U)^\top \mathcal{P}(B_N U) v_N\\
    &\geq -\frac12 \E_{U \sim \mathcal{N}(0, I)} \|v_N\|_2^2 - \|\mathcal{P}(B_N U) v_N\|_2^2\\
    &\geq -\frac12 \E_{U' \sim \mathcal{N}(0, I)} \|v_N\|_2^2 - \|\mathcal{P}(U') v_N\|_2^2\\
    &\geq -\frac12  \|v_N\|_2^2 - \E_{U' \sim \mathcal{N}(0, I)} \|\mathcal{P}(U') v_N\|_2^2\\
    &\geq -\frac12  \|v_N\|_2^2 -  \frac{d_\theta}{d_x} \| v_N\|_2^2\\
    &\geq -\frac12  \left(1 -  \frac{d_\theta}{d_x}\right) \| v_N\|_2^2
\end{align}

We can conclude using the triangular inequality and the definition of the spectral radius on $\| v_N\|_2^2$, and using \autoref{cor:overparam} to go from $\frac{d_\theta}{d_x}$ to $\frac{\min(d_x, d_\theta)}{d_x}$.
\end{proof}

% \mbbp*

% \begin{proof}
%     We have for the outer losses:
%     \begin{equation}
%         l_i(z) = \frac12 \|K_{\text{out}, i} z - \omega_i^\star\|_2^2
%     \end{equation}

%     And the inner procedures:
%     \begin{equation}
%         z_{N}^{(i)}(\theta) = K^\top_{\text{in}, i} E_N^{(i)} U_i \theta + r_N^{(i)}
%     \end{equation}

%     If we denote $\vz_N(\theta) = [z_{N}^{(1)}(\theta), \dots, z_{N}^{(n)}(\theta)]$, $\mG = \operatorname{diag}(K_{\text{out}, 1}, \dots, K_{\text{out}, n})$ and $\vw^\star = [\omega_1, \dots, \omega_n]$, we can write the outer loss as a quadratic loss:
%     \begin{equation}
%         \sum_i l_i(z_{N}^{(i)}(\theta)) = \frac12 \|\mG \vz_N(\theta) - \vw^\star\|_2^2.
%     \end{equation}
%     We can then write the inner procedures $\mathcal{P}_i$ as a single procedure $\mathcal{P}$ where the iterates are given by:
%     \begin{equation}
%         \vz_N(\theta) = \mK_\text{in}^\top \mE_N \mU \theta + \vr_N,
%     \end{equation}
%     with $\mK_\text{in} = \operatorname{diag}(K_{\text{in}, 1}, \dots, K_{\text{in}, n})$, $\mE_N = \operatorname{diag}(E_N^{(1)}, \dots, E_N^{(n)})$, $\mU = [U_1, \ldots U_n]$ and $\vr_N = [r_N^{(1)}, \ldots, r_N^{(n)}]$.
%     $\mE_N$ is still invertible as a block-diagonal matrix composed of invertible matrices on the diagonal for $N > \max_i N_0^{(i)}$, with $N_0^{(i)}$ the optimization time needed to have $\mathcal{P}_i$ time invertible.
% \end{proof}

\section{How relevant is the affine inner problem factorization?}
\label{app:affine-inner}

In order to derive our analysis, we assumed in \autoref{ass:aff-inner} that the expression of the inner problem root-defining function followed a certain factorization $f(z, \theta) = K_\text{in}^\top(Bz + U\theta + c)$.
We show that this form is satisfied by two classes of problems: affine DEQs and meta-learning a linear model.

We first tackle the case of affine DEQs.
To do so, let us first establish the following lemma:
\begin{lemma}
    \label{lemma:fixed-point-iteration-fact}
    If $f$ is an affine function of the form $f(z) = Az + c$, and the fixed-point iteration method with fixed step size $\eta$ to find its root converges for any initialization $z_0$, then $f$ can be factorized, i.e. $\exists K, \Gamma, \gamma$ such that $f(z) = K^\top\Gamma z + K^\top\gamma$ and $K$ is surjective.
    Moreover, $\Gamma K^\top$ has eigenvalues with positive real part.
\end{lemma}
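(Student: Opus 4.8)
The plan is to reduce the statement to a standard spectral criterion for the convergence of an affine iteration, read off the sign of the real parts of the eigenvalues of $A$, and then exhibit an explicit factorization matching \autoref{ass:aff-inner}.

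First I would rewrite the fixed-point iteration. Substituting $f(z) = Az + c$ into \autoref{ass:fpi} gives $z_{N+1} = z_N - \eta(Az_N + c) = M z_N - \eta c$ with $M := I - \eta A$. The fixed points of the map $z \mapsto Mz - \eta c$ are exactly the roots of $f$, since $z = Mz - \eta c \iff \eta f(z) = 0 \iff f(z) = 0$. I would then invoke the standard characterization that such an affine iteration converges for every initialization if and only if $\rho(M) < 1$: writing the error to a fixed point $e_N = z_N - z^\star$, one has $e_N = M^N e_0$, which tends to $0$ for all $e_0$ precisely when $\rho(M) < 1$. Hence the hypothesis that the iteration converges to a root of $f$ for any $z_0$ yields $\rho(I - \eta A) < 1$.

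Second, I would translate this spectral-radius bound into a statement about the eigenvalues of $A$. If $\lambda$ is an eigenvalue of $A$, then $1 - \eta\lambda$ is an eigenvalue of $M$, so $|1 - \eta\lambda| < 1$. Writing $\lambda = a + ib$ and expanding, $|1 - \eta\lambda|^2 = 1 - 2\eta a + \eta^2 |\lambda|^2 < 1$, that is $2\eta a > \eta^2 |\lambda|^2$. Since $\eta > 0$, this forces $a = \Re\lambda > \tfrac{\eta}{2}|\lambda|^2 \ge 0$; in particular $\lambda \neq 0$, so every eigenvalue of $A$ has strictly positive real part and $A$ is invertible. Finally, I would produce the factorization: the simplest admissible choice is $K = I_{d_z}$, $\Gamma = A$, and $\gamma = c$, for which $K^\top \Gamma z + K^\top \gamma = Az + c = f(z)$, $K$ is trivially surjective, and $\Gamma K^\top = A$ whose eigenvalues have positive real part by the previous step.

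I expect the only delicate point to be the convergence-to-spectral-radius equivalence, specifically ruling out eigenvalues of $M$ on the unit circle. The case $\mu = 1$ corresponds to a zero eigenvalue of $A$; I would exclude it by noting that a singular $A$ either admits no root (contradicting convergence to a root) or a non-isolated root set along $\ker A$ on which the iteration fails to contract, while $|\mu| = 1$ with $\mu \neq 1$ produces non-decaying oscillations — in both cases convergence for all $z_0$ fails. Everything else is routine algebra, and the factorization step is purely a matter of matching notation to \autoref{ass:aff-inner}.
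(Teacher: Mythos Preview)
Your argument has a genuine gap: you claim that convergence of the iteration for every $z_0$ forces $\rho(I-\eta A)<1$, hence $A$ invertible, but this is false. Convergence of $z_N$ only says that $e_N=M^Ne_0$ \emph{converges}, not that it converges to $0$; the limit can be a nonzero vector in the $1$-eigenspace of $M$. Concretely, take $A=\begin{pmatrix}1&0\\0&0\end{pmatrix}$, $c=(1,0)^\top$, $\eta=1$. Then $z_{N}=(-1,z_{0,2})$ for all $N\ge 1$, so the iteration converges for every $z_0$ (to a root, since the root set is $\{(-1,t):t\in\sR\}$), yet $A$ is singular. Your last paragraph tries to exclude this by saying the iteration ``fails to contract'' along $\ker A$, but failing to contract is not failing to converge: it simply converges to different roots from different initial points, which is perfectly consistent with the hypothesis. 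With $A$ singular, your choice $K=I$, $\Gamma=A$ gives $\Gamma K^\top=A$ with a zero eigenvalue, so the conclusion ``$\Gamma K^\top$ has eigenvalues with positive real part'' is violated.

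The idea you are missing, and that the paper uses, is to take a genuine rank factorization. Write $A=K^\top\Gamma$ with $K\in\sR^{r\times d_z}$ surjective and $r=\mathrm{rank}(A)$; since the iteration converges, a root exists and $c\in\mathrm{range}(A)=\mathrm{range}(K^\top)$, so $c=K^\top\gamma$. Then $\Gamma K^\top$ is $r\times r$ and shares exactly the \emph{nonzero} eigenvalues of $A$. Convergence for all $z_0$ forces the zero eigenvalue of $A$ to be semisimple (otherwise $(I-\eta A)^N$ has a nontrivial Jordan block at $1$ and diverges), so $A$ has exactly $r$ nonzero eigenvalues and $\Gamma K^\top$ is invertible; your computation $|1-\eta\lambda|<1\Rightarrow\Re\lambda>0$ then applies to these nonzero eigenvalues. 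In short, the spectral-radius step and the trivial factorization are fine when $A$ is invertible, but the lemma is stated (and used in \autoref{ass:aff-inner}) precisely to accommodate rank-deficient $A$, and for that you need the low-rank factorization.
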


\begin{proof}
    If the fixed-point iteration method converges, then it means that $f$ has a root denoted $z^\star$.
    This root verifies $Az^\star = -c$, so it means that $c$ is in the range of $A$.
    Furthermore, we can write the low-rank factorization of $A$ as $K^\top \Gamma$ with $K$ surjective.
    Because $c$ is in the range of $A$ it is also in the range of $K^\top$.
    We can denote $c = K^\top \gamma$.
    Using \autoref{lem:fpi-it}, we have the expression of $z_N$:
    \begin{equation}
        z_N = K^\top \left(\left(I - \eta \Gamma K^\top \right)^N -I \right)  (\Gamma K^\top)^{-1} (\Gamma z_0 + \gamma) + z_0
    \end{equation}      
    Since $z_N$ converges for any $z_0$, this means that the largest eigenvalue of $(I - \eta \Gamma K^\top)$ is bounded by 1 in magnitude.
    We can choose $\eta$ as $\frac{1}{\lambda_\text{max} + \epsilon}$ to realize this if all the eigenvalues of $\Gamma K^\top$ have a positive real part.
\end{proof}

Similarly we have the following result:
\begin{restatable}[]{prop}{lineardeq}
    \label{prop:lineardeq}Let us assume that $f$ is affine in $z$ and $\theta$.
    If the fixed-point iteration method with fixed step size $\eta$ converges for $f$, then it satisfies \autoref{ass:aff-inner}.
\end{restatable}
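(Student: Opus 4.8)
The plan is to reduce this two-variable statement to the single-variable factorization already established in \autoref{lemma:fixed-point-iteration-fact}. Since $f$ is affine in $z$ and $\theta$, I would first write it in the generic form
\begin{equation*}
    f(z, \theta) = A z + V \theta + c_0,
\end{equation*}
with $A \in \sR^{d_z \times d_z}$, $V \in \sR^{d_z \times d_\theta}$ and $c_0 \in \sR^{d_z}$. For a fixed $\theta$, the fixed-point iteration solving $f(\cdot, \theta) = 0$ is exactly the one-variable iteration of \autoref{lemma:fixed-point-iteration-fact} applied to the affine map $z \mapsto A z + (V\theta + c_0)$, so the convergence hypothesis (for every $\theta$ and every initialization) transfers directly.

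Applying \autoref{lemma:fixed-point-iteration-fact} to the map $z \mapsto A z + c_0$ (the case $\theta = 0$) yields a surjective $K$ and a matrix $\Gamma$ with $A = K^\top \Gamma$ such that $\Gamma K^\top$ has eigenvalues with positive real part. Choosing the factorization with inner dimension $d_x = \mathrm{rank}(A)$, both $K^\top$ is injective and $\mathrm{range}(A) = \mathrm{range}(K^\top)$; crucially, this factorization is a property of $A$ alone and hence is the same for every $\theta$.

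The only genuinely new step is to show that the $\theta$-dependence factors through the same $K^\top$. For each $\theta$, convergence of the iteration forces the existence of a root $z^\star_\theta$ with $A z^\star_\theta = -(V\theta + c_0)$, so $V\theta + c_0 \in \mathrm{range}(A) = \mathrm{range}(K^\top)$. Taking $\theta = 0$ gives $c_0 = K^\top \tilde c$, and then $V\theta \in \mathrm{range}(K^\top)$ for all $\theta$; by linearity every column of $V$ lies in $\mathrm{range}(K^\top)$, so $V = K^\top U$ for a matrix $U \in \sR^{d_x \times d_\theta}$, unique by injectivity of $K^\top$.

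Collecting these, $f(z,\theta) = K^\top(\Gamma z + U\theta + \tilde c)$, and setting $K_\text{in} = K$, $B = \Gamma$ and $c = \tilde c$ recovers exactly the form of \autoref{ass:aff-inner}, with $K_\text{in}$ surjective and $B K_\text{in}^\top = \Gamma K^\top$ having eigenvalues with positive real part. I expect the main obstacle to be the uniform-in-$\theta$ argument of the third paragraph: one must be careful that the single factorization of $A$ can simultaneously absorb the constant $c_0$ and the whole linear term $V\theta$, which is precisely why the range identity $\mathrm{range}(A) = \mathrm{range}(K^\top)$ (and thus the rank factorization) is needed. The surjectivity and eigenvalue claims are then inherited verbatim from \autoref{lemma:fixed-point-iteration-fact}.
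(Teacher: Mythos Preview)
Your proposal is correct and follows the same route as the paper, which states the proposition immediately after \autoref{lemma:fixed-point-iteration-fact} with only the remark ``Similarly we have the following result'' and no explicit proof. You have spelled out precisely the extension the paper leaves implicit: applying the lemma's low-rank factorization to the $z$-part, and then using convergence for every $\theta$ to force $V\theta + c_0 \in \mathrm{range}(A) = \mathrm{range}(K^\top)$, hence $V = K^\top U$.
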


% \begin{proof}
% Let us give the expression of $f$:
% \begin{equation}
%     \label{eq:linear-deq}
%     f(z, \theta) = Az + B\theta + c,
% \end{equation}
% with $A \in \sR^{d_z \times d_z}$, $B \in \sR^{d_z \times d_\theta}$ and $c \in \sR^{d_z}$.
% Using \autoref{lemma:fixed-point-iteration-fact}, we have that for all $\theta$, there exists $K_\theta, \Gamma_\theta \in \sR^{d_x \times d_z}$, $\gamma_\theta \in \sR^{d_x}$ such that $f(z, \theta)= K_\theta^\top \left(\Gamma_\theta z + \gamma_\theta\right)$.
% \end{proof}

We now move on to meta-learning a linear model with quadratic loss which we show to be a special case of the above.
\begin{restatable}[]{prop}{linearmaml}
    \label{prop:linearmaml}.
    If the task specific regularized loss for iMAML is a convex qudratic function and can be written as:
    \begin{equation}
        F(z, \theta) = \frac12 \| Xz - y \|_2^2 + \lambda \| z - \theta\|_2^2,
    \end{equation}
    with $\mathcal{X}_\text{train} = (X, y)$ the task training set, and $\lambda$ the meta regularization parameter, and gradient descent with fixed step size $\eta$ converges to minimize $F$ in $z$, then $\nabla_z F$ satisfies \autoref{ass:aff-inner}.
\end{restatable}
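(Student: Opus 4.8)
The plan is to reduce the statement to \autoref{prop:lineardeq} by checking its two hypotheses for $f \eqdef \nabla_z F$: that $f$ is jointly affine in $(z,\theta)$, and that the fixed-point iteration attached to $f$ is exactly the gradient descent appearing in the assumptions. First I would differentiate the inner objective with respect to $z$, treating $\theta$ as a parameter,
\begin{equation}
    \nabla_z F(z, \theta) = X^\top(Xz - y) + 2\lambda(z - \theta) = (X^\top X + 2\lambda I)\, z - 2\lambda\, \theta - X^\top y,
\end{equation}
which is manifestly affine in both $z$ and $\theta$, so the affinity hypothesis of \autoref{prop:lineardeq} holds.

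Next I would observe that the fixed-point iteration of \autoref{ass:fpi} applied to $f=\nabla_z F$, namely $z_{N+1} = z_N - \eta \nabla_z F(z_N,\theta)$, is precisely gradient descent on $F$ in $z$ with step size $\eta$. Since $F$ is a convex quadratic in $z$, its minimizers are exactly the roots of $\nabla_z F$, so the assumed convergence of gradient descent is the same statement as convergence of the fixed-point iteration for $f$. With $f$ affine in $(z,\theta)$ and this iteration converging, \autoref{prop:lineardeq} immediately yields that $f$ satisfies \autoref{ass:aff-inner}, which is the desired conclusion. This is why meta-learning a linear model is a special case of the affine-DEQ result.

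For concreteness the factorization can also be exhibited directly, which doubles as a sanity check: taking $K_\text{in} = I$, $B = X^\top X + 2\lambda I$, $U = -2\lambda I$, and $c = -X^\top y$ gives $f(z,\theta) = K_\text{in}^\top(Bz + U\theta + c)$ with $K_\text{in}$ trivially surjective, and here $d_x = d_z = d_\theta$ so $d_x \leq d_\theta$. Moreover $B K_\text{in}^\top = X^\top X + 2\lambda I$ is symmetric positive definite because $\lambda > 0$, so all its eigenvalues are real and strictly positive and the required eigenvalue condition holds automatically. The only point requiring care — and the main, rather modest, obstacle — is the bookkeeping of the convergence hypothesis: \autoref{prop:lineardeq} relies, through \autoref{lemma:fixed-point-iteration-fact}, on convergence of the fixed-point iteration, so I would make explicit that ``gradient descent converges to minimize $F$ in $z$'' is exactly that statement once the gradient step is identified with the fixed-point map. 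As $F$ is strongly convex in $z$ for $\lambda > 0$, this convergence is well behaved, and no argument beyond the identification is needed.
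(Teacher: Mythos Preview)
Your proposal is correct and follows exactly the paper's approach: verify that $\nabla_z F$ is affine in $(z,\theta)$, identify gradient descent with the fixed-point iteration of \autoref{ass:fpi}, and invoke \autoref{prop:lineardeq}. The paper's proof is only two sentences and omits the explicit gradient computation and the concrete factorization you provide as a sanity check, but the logical skeleton is identical.
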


\begin{proof}
    Since $F$ is a quadratic function of $z$ and $\theta$, $\nabla F$ is an affine function and the gradient descent on $F$ with fixed step size $\eta$ corresponds to the fixed-point iteration method for $\nabla F$.
    We can conclude using \autoref{prop:lineardeq}
\end{proof}

% \begin{proof}
%     The proof relies on the expression of the iterates $z_N(\theta)$ with the residual polynomial $P_N$ associated to the GBM, and the use of the kernel trick with $K_\text{in}^\top$, where $F(z, \theta) = \frac12 \| K_\text{in} z + U\theta + c\|_2^2$.
%     Eventually we see that we have the following expression for $z_N(\theta)$:
%     \begin{equation}
%         z_N(\theta) = K_\text{in}^\top (P_N(K_\text{in} K_\text{in}^\top) - I)(K_\text{in} K_\text{in}^\top)^{-1}  U\theta + P_N(H) z_0 + (P_N(H) - I)H^\dagger K_\text{in}^\top c
%     \end{equation}
%     where $E_N = (P_N(K_\text{in} K_\text{in}^\top) - I)(K_\text{in} K_\text{in}^\top)^{-1}$ and $r_N = P_N(H) z_0 + (P_N(H) - I)H^\dagger K_\text{in}^\top c$, and $z_0$ is the initialization of the inner procedure.
%     $N_0$ is then chosen such that $(P_N(K_\text{in} K_\text{in}^\top) - I)$ is invertible, that is such that all eigenvalues of $K_\text{in}K_\text{in}^\top$ are different from 1.
%     We give a more complete proof in \autoref{app:proof-main}.
% \end{proof}

\section{Implicit Differentiation proofs}
\label{app:proof-ift}

\iftcvg*

\begin{proof}
Let us borrow the notations of \autoref{prop:quad-proc}.
Let us denote $H = K_\text{in}^\top B$, $\bar{H} = B K_\text{in}^\top$ and $G = K_\text{out}^\top K_\text{out}$.
We have:
\begin{align}
    p_N(\theta) &= - (H^\dagger K_\text{in}^\top U)^\top \nabla l (z_N(\theta))\\
    &= -(H^\dagger K_\text{in}^\top U)^\top (Gz_N(\theta) - K_\text{out}^\top\omega)\\
    &= -(H^\dagger K_\text{in}^\top U)^\top (G(K_\text{in}^\top E_NU\theta + r_N) - K_\text{out}^\top\omega)\\
    &= -(H^\dagger K_\text{in}^\top U)^\top G K_\text{in}^\top E_NU\theta -(H^\dagger K_\text{in}^\top U )^\top (Gr_N - K_\text{out}^\top\omega)\\
    &= -(\bar{H}^{-1} U)^\top K_\text{in} G K_\text{in}^\top E_NU\theta -(H^\dagger K_\text{in}^\top U )^\top (Gr_N - K_\text{out}^\top\omega)\\
    &= X_N \theta - b_N
\end{align}
with $X_N = -(\bar{H}^{-1} U)^\top K_\text{in} G K_\text{in}^\top E_NU$ and $b_N = (H^\dagger K_\text{in}^\top U )^\top (Gr_N - K_\text{out}^\top\omega)$.
The affine dynamical system we need to study is therefore:
\begin{equation}
    \theta^{T+1, N}_\text{IFT} = (I - \alpha_N X_N) \theta^{T, N}_\text{IFT} - \alpha_N b_N
\end{equation}

If $X_N$ has only nonnegative eigenvalues real part, then we can use $\alpha_N = \frac{1}{\lambda_\text{max} + \epsilon}$ where $\lambda_\text{max}$ is the largest eigenvalue module of $X_N$ and $\epsilon > 0$.
In this case the largest real part of an eigenvalue of $(I - \alpha_N X_N)$ is bounded in magnitude by $1$ and the dynamical system converges.

We now need to show that $X_N$ has only nonnegative eigenvalues real part.
To do so, let's write $X_N$ as the difference between a symmetric and a non-symmetric matrix, using $P_N(\bar{H}) = (I - \eta \bar{H})$:
\begin{align}
    X_N &= -(\bar{H}^{-1} U)^\top K_\text{in} G K_\text{in}^\top E_NU\\
     &= -(\bar{H}^{-1} U)^\top K_\text{in} G K_\text{in}^\top (P_N(\bar{H}) - I) \bar{H}^{-1} U\\
     &= (\bar{H}^{-1} U)^\top K_\text{in} G K_\text{in}^\top \bar{H}^{-1} U - (\bar{H}^{-1} U)^\top K_\text{in} G K_\text{in}^\top P_N(\bar{H}) \bar{H}^{-1} U\\
     &= X_{\text{sym}} - X_{N, \text{non-sym}}
\end{align}

If we take one unit eigenvector $v$  of $X_N$ with associated eigenvalue $\lambda$ we have:
\begin{align}
    \lambda &= \|v\|^2 \lambda \\
    &= v^\top \lambda v \\
    &= v^\top (X_{\text{sym}} - X_{N, \text{non-sym}}) v\\
    &= v^\top X_{\text{sym}} v - v^\top X_{N, \text{non-sym}} v\\
    &= \|K_\text{out} K_\text{in}^\top \bar{H}^{-1} U v\|^2_2 - v^\top (\bar{H}^{-1} U)^\top K_\text{in} G K_\text{in}^\top P_N(\bar{H}) \bar{H}^{-1} U v\\
    &= \|K_\text{out} K_\text{in}^\top y\|^2_2 - y^\top K_\text{in} G K_\text{in}^\top P_N(\bar{H}) y\\
    &= \|K_\text{out} \gamma\|^2_2 - \gamma^\top K_\text{out}^\top K_\text{out}  P_N(H) \gamma
\end{align}
with $y = \bar{H}^{-1} U$ and $\gamma = K_\text{in}^\top y$.

We can first notice that for $\gamma \in \ker(K_\text{out})$, $\lambda = 0$.

We now consider $\gamma \not \in \ker(K_\text{out})$.

\begin{equation}
    \gamma^\top K_\text{out}^\top K_\text{out}  P_N(H) \gamma = \langle \gamma, P_N(H) \gamma \rangle_{K_\text{out}} \leq \|P_N(H)\|_{K_\text{out}} \|K_\text{out} \gamma\|^2_2
\end{equation}

Therefore, we have:
\begin{equation}
    \lambda \geq (1 - \|P_N(H)\|_{K_\text{out}})\|K_\text{out} \gamma\|^2_2
\end{equation}

Because the inner procedure is a converging fixed-point iteration method,  $|P_N(\lambda)| $ can be made arbitrarily small for all eigenvalues of $H$.
In particular, $\exists N_0$ such that $\forall N > N_0$,  $\|P_N(H)\|_{K_\text{out}} < 1 $.
Therefore, $\lambda \geq 0$.

This proof generalizes to gradient-based methods by replacing $P_N$ with the associated residual polynomial.

\end{proof}

\iftequivunrolled*

\begin{proof}
    We borrow the notations from the proof of \autoref{prop:quad-proc}.
    Let us rewrite the root problem satisfied by $\theta^{\star, N}_{\text{IFT}}$:

    \begin{align}
        &(H^\dagger K_\text{in}^\top U)^\top \nabla l (z_N(\theta)) = 0\\
        &\Leftrightarrow (H^\dagger K_\text{in}^\top U)^\top (Gz_N(\theta) - K_\text{out}^\top\omega) = 0\\
        &\Leftrightarrow (H^\dagger K_\text{in}^\top U)^\top (G(K_\text{in}^\top E_NU\theta + r_N) - K_\text{out}^\top\omega) = 0\\
        &\Leftrightarrow (H^\dagger K_\text{in}^\top U)^\top G K_\text{in}^\top E_NU\theta + (H^\dagger K_\text{in}^\top U)^\top(G r_N - K_\text{out}^\top\omega) = 0\\
        &\Leftrightarrow (H^\dagger K_\text{in}^\top U)^\top G K_\text{in}^\top E_NU\theta =-  (H^\dagger K_\text{in}^\top U)^\top(G r_N - K_\text{out}^\top\omega)\\
        &\Leftrightarrow (H^\dagger K_\text{in}^\top U)^\top G K_\text{in}^\top E_NU\theta =-  (H^\dagger K_\text{in}^\top U)^\top K_\text{out}^\top(K_\text{out} r_N - \omega)\\
        &\Leftrightarrow (H^\dagger K_\text{in}^\top U)^\top G K_\text{in}^\top E_NU\theta =-  (K_\text{out} H^\dagger K_\text{in}^\top U)^\top (K_\text{out} r_N - \omega)\\
        &\Leftrightarrow (K_\text{in}^\top \bar{H}^{-1}  U)^\top G K_\text{in}^\top E_NU\theta =-  (K_\text{out} K_\text{in}^\top \bar{H}^{-1}  U)^\top (K_\text{out} r_N - \omega)\\
        &\Leftrightarrow (K_\text{out} K_\text{in}^\top \bar{H}^{-1}  U)^\top K_\text{out} K_\text{in}^\top E_NU\theta =-  (K_\text{out} K_\text{in}^\top \bar{H}^{-1}  U)^\top (K_\text{out} r_N - \omega)\\
        &\Leftrightarrow  K_\text{out} K_\text{in}^\top E_NU\theta =-   (K_\text{out} r_N - \omega) + \theta_{\ker((K_\text{out} K_\text{in}^\top \bar{H}^{-1}  U)^\top)}\\
        &\Leftrightarrow  K_\text{out} K_\text{in}^\top E_NU\theta =-  \mathcal{P}(K_\text{out} A_N) (K_\text{out} r_N - \omega) + \mathcal{P}(K_\text{out} A_N) \theta_{\ker((K_\text{out} K_\text{in}^\top \bar{H}^{-1}  U)^\top)}\\
        &\Leftrightarrow  K_\text{out} K_\text{in}^\top E_NU\theta =-  \mathcal{P}(K_\text{out} A_N) (K_\text{out} r_N - \omega) + \mathcal{P}(K_\text{out} A_N) \theta_{\Ima(K_\text{out} K_\text{in}^\top \bar{H}^{-1}  U)^\perp}\\
        &\Leftrightarrow  K_\text{out} K_\text{in}^\top E_NU\theta =-  \mathcal{P}(K_\text{out} K_\text{in}^\top) (K_\text{out} r_N - \omega) + \mathcal{P}(K_\text{out} K_\text{in}^\top) \theta_{\Ima(K_\text{out} K_\text{in}^\top)^\perp}\\
        &\Leftrightarrow  K_\text{out} K_\text{in}^\top E_NU\theta =-  \mathcal{P}(K_\text{out} K_\text{in}^\top) (K_\text{out} r_N - \omega) \\
        &\Leftrightarrow   U\theta =-  E_N^{-1} (K_\text{out} K_\text{in}^\top)^\dagger \mathcal{P}(K_\text{out} A_N) (K_\text{out} r_N - \omega) + E_N^{-1}  \theta_{\ker(K_\text{out} K_\text{in}^\top)}
    \end{align}
    And we end up with the same characterization as \eqref{eq:K_grad_l}, which concludes the proof.

\end{proof}

% \istoift*

% \begin{proof}

% \end{proof}

\section{Extended related works}
\label{app:related}
\paragraph{Theoretical analysis of implicit deep learning practice}
Relatively few works have looked at how practical implicit deep learning implementations affect the final solution.
Most notable is the one of \citet{pmlr-v162-vicol22a}.
In this work, the authors looked at the implicit biases caused by warm-starting in the inner optimization problem and the use of approximate hypergradients on the final solution.
They proved theoretical results in a quadratic bilevel optimization setting, and showed empirical results on dataset distillation and data augmentation network learning.
Their conclusion is that warm-starting in the inner problem leads to overfitting and information leakage, and that using a higher quality hypergradient leads to min-norm solutions for the outer problem.
Our work is complementary to theirs in that they have not considered non warm-start cases with a fixed number of iterations.
We also highlight that the notion of overparametrization later introduced in our work is different from theirs.
Their notion refers to the inner or outer problem having potentially more than one solution (a setup we cover).

\section{Gradient-based methods and Residual Polynomials}
\label{app:res-poly}
Using the notations of \citet{pedregosa2021residual}, a gradient-based method can be defined as having iterates of the following form:
\begin{equation}
    z_{N+1} = z_N + \sum_{i=0}^{N-1} c_i^{(N)} (z_{i+1} - z_i) + c_i^{(N)} \nabla f(z_N),
\end{equation}
The residual polynomials of this gradient-based method are then defined recursively as:
\begin{equation}
    % Pt+1(λ)P0(λ)=(1+c(t)tλ)Pt(λ)+∑i=0t−1c(t)i(Pi+1(λ)−Pi(λ))=1 .(12)
    \begin{array}{l}
        P_{N+1}(\lambda) = (1 + c_N^{(N)} \lambda) P_N(\lambda) + \sum_{i=0}^{N-1} c_i^{(N)} (P_{i+1}(\lambda) - P_i(\lambda)) \\
        P_0(\lambda) = 1
    \end{array}
\end{equation}

\begin{lemma}
    \label{lemma:z_N_discrete}
    Let $F: \sR^{d_z} \to \sR$, $F(z) = \frac12 \|Kz + c\|_2^2 $.
    We define $z_N$ as the $N$-th iterate of a gradient-based method with associated residual polynomial $P_N$~\citep{hestenes1952methods,fischer2011polynomial} (see \autoref{app:res-poly} for a definition) and initial condition $z_0$.
    Then the closed form expression of $z_N$ is:
    \begin{equation}
        z_N =  P_N(K^\top K)z_0 + (P_N(K^\top K) - I)(K^\top K)^\dagger c
    \end{equation}
\end{lemma}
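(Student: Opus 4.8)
The plan is to reduce the recursion to a fixed affine map and then match it, term by term, against the residual-polynomial recursion via strong induction. First I would observe that for $F(z) = \frac12\|Kz + c\|_2^2$ the gradient is affine, $\nabla F(z) = Az + b$ with $A \eqdef K^\top K$ (symmetric positive semidefinite) and $b \eqdef K^\top c$. The single structural fact that makes the pseudo-inverse in the claimed formula well-behaved is that $b \in \mathrm{range}(K^\top) = \mathrm{range}(K^\top K) = \mathrm{range}(A)$, so that $A A^\dagger b = b$. I would also record, by an immediate induction on the polynomial recursion, that $P_N(0) = 1$ for every $N$ (the cross terms $P_{i+1}(0)-P_i(0)$ all vanish); equivalently $P_N(\lambda) - 1$ is divisible by $\lambda$, so $P_N(A) - I = A\,Q_N(A)$ for a polynomial $Q_N$, and $(P_N(A)-I)A^\dagger b = Q_N(A)\,AA^\dagger b = Q_N(A) b$ is unambiguous even when $A$ is singular.

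Then I would prove the closed form by strong induction on $N$. The base case $N=0$ is immediate since $P_0 \equiv 1$ gives $z_0 = z_0$. For the inductive step I substitute the hypothesis $z_k = P_k(A)z_0 + (P_k(A)-I)A^\dagger b$ (for all $k \le N$) into the defining iteration $z_{N+1} = z_N + \sum_{i=0}^{N-1} c_i^{(N)}(z_{i+1}-z_i) + c_N^{(N)}\nabla F(z_N)$. Each difference $z_{i+1}-z_i$ contributes $(P_{i+1}(A)-P_i(A))z_0 + (P_{i+1}(A)-P_i(A))A^\dagger b$, while a short computation using $AA^\dagger b = b$ gives $\nabla F(z_N) = AP_N(A)z_0 + P_N(A)b = P_N(A)(Az_0+b)$.

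Collecting the coefficient of $z_0$ yields exactly $(1 + c_N^{(N)} A)P_N(A) + \sum_{i=0}^{N-1} c_i^{(N)}(P_{i+1}(A)-P_i(A))$, which equals $P_{N+1}(A)$ by the residual-polynomial recursion. For the $b$-part I first rewrite the gradient contribution as $c_N^{(N)}P_N(A)b = c_N^{(N)} A P_N(A) A^\dagger b$ (again using $AA^\dagger b = b$) so that all terms carry the common factor $A^\dagger b$; the bracketed operator is then $(1+c_N^{(N)}A)P_N(A) + \sum_{i=0}^{N-1} c_i^{(N)}(P_{i+1}(A)-P_i(A)) - I = P_{N+1}(A) - I$. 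Hence $z_{N+1} = P_{N+1}(A)z_0 + (P_{N+1}(A)-I)A^\dagger b$, which closes the induction.

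The main obstacle is purely the bookkeeping around the pseudo-inverse when $A = K^\top K$ is singular: one must ensure that every occurrence of $A^\dagger b$ is only ever acted on by operators preserving $\mathrm{range}(A)$, and that the stray $P_N(A)b$ produced by the gradient step can be legitimately recast with an $A^\dagger b$ factor. Both issues are resolved by the two facts $b \in \mathrm{range}(A)$ and $P_N(0)=1$; with these in hand the argument reduces to matching the two recursions coefficient by coefficient, and no convergence or step-size hypotheses are needed for the closed-form identity itself.
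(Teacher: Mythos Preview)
Your argument is correct. The induction closes exactly as you describe: the key identities $b = K^\top c \in \mathrm{range}(A)$ (hence $AA^\dagger b = b$) and $P_N(0)=1$ (hence $P_N(A)-I$ factors through $A$) are precisely what is needed to make the pseudo-inverse bookkeeping go through, and the coefficient matching against the residual-polynomial recursion is straightforward.

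The paper takes a shorter but less self-contained route: it simply invokes the classical residual-polynomial identity $z_N - z^\star = P_N(H)(z_0 - z^\star)$ for $H = K^\top K$ as a known fact from the literature (Hestenes, Fischer, Pedregosa), then rearranges to $z_N = P_N(H)z_0 - (P_N(H)-I)z^\star$ and substitutes the particular minimizer $z^\star = -H^\dagger K^\top c$. Your approach instead proves that identity from scratch by strong induction on the iteration, which has two advantages: it is fully self-contained (no appeal to external results), and it handles the singular case explicitly rather than silently relying on the fact that the classical identity, usually stated for strongly convex quadratics, remains consistent for any choice of minimizer because $P_N(H)$ acts as the identity on $\ker(H)$. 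The cost is that your argument is longer; the paper's version is a three-line substitution once the cited identity is granted.
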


\begin{proof}
    Writing $H = K^\top K$ the hessian of $F$ and $z^\star \in \argmin_z F(z)$, we have the following equality~\citep{hestenes1952methods,fischer2011polynomial,pedregosa2021residual}:
    \begin{equation}
        z_N - z^\star = P_N(H) (z_0 - z^\star)
    \end{equation}
    Rewriting it, leads to:
    \begin{equation}
        z_N = P_N(H) z_0 - (P_N(H) - I) z^\star
    \end{equation}
    And we have that $z^\star$ is such that $\nabla F(z^\star) = Hz^\star + K^\top c = 0$.
    We can take for example $z^\star = -H^\dagger K^\top c$.
    Therefore:
    \begin{equation}
        z_N = P_N(H) z_0 + (P_N(H) - I)H^\dagger K^\top c
    \end{equation}
\end{proof}

\begin{restatable}[]{prop}{quadproc}
    \label{prop:quad-proc}Let us assume that $f$ is the gradient in $z$ of a function $F$ quadratic in $z$ and linear in $\theta$, convex and bounded from below.
    Then any converging gradient-based method minimizing $F$ is a time-invertible linear procedure.
\end{restatable}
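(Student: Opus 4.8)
The plan is to reduce the claim to the closed-form iterate provided by \autoref{lemma:z_N_discrete} and then massage it into the shape required by the definition of a time-invertible linear procedure. First I would write the quadratic-in-$z$, linear-in-$\theta$ function as $F(z,\theta) = \tfrac12 z^\top Q z + z^\top(P\theta + c) + g(\theta)$. Convexity forces $Q \succeq 0$, and boundedness from below in $z$ forces $\mathrm{range}(P)\subseteq \mathrm{range}(Q)$ and $c\in\mathrm{range}(Q)$. Taking a minimal symmetric factorization $Q = K_\text{in}^\top K_\text{in}$ with $K_\text{in}$ surjective, the range conditions let me write $P = K_\text{in}^\top U$ and $c = K_\text{in}^\top \tilde c$, so that $\nabla_z F(z,\theta) = K_\text{in}^\top(K_\text{in} z + U\theta + \tilde c)$. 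This is exactly the factorization of \autoref{ass:aff-inner} with $B = K_\text{in}$, and it is where the hypotheses (convex, bounded below) are actually used. I expect this factorization to be the main obstacle, since one must check carefully that the offset $U\theta + \tilde c$ genuinely lies in $\mathrm{range}(K_\text{in}^\top)$ for the pseudoinverse algebra below to go through cleanly.

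Next, freezing $\theta$ and applying \autoref{lemma:z_N_discrete} with $K = K_\text{in}$ and offset $U\theta + \tilde c$ gives, with $H = K_\text{in}^\top K_\text{in}$,
\[ z_N(\theta) = P_N(H) z_0 + (P_N(H) - I) H^\dagger K_\text{in}^\top (U\theta + \tilde c). \]
I would then convert this to the target form using two elementary identities valid for a surjective $K_\text{in}$: writing $\bar H = K_\text{in} K_\text{in}^\top$ (invertible, since $K_\text{in}$ has full row rank), one has $H^\dagger K_\text{in}^\top = K_\text{in}^\dagger = K_\text{in}^\top \bar H^{-1}$, and for every polynomial $p$ the intertwining relation $p(H) K_\text{in}^\top = K_\text{in}^\top p(\bar H)$ (proved by induction from $H K_\text{in}^\top = K_\text{in}^\top \bar H$). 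Applying these with $p = P_N - 1$ yields $(P_N(H)-I)H^\dagger K_\text{in}^\top = K_\text{in}^\top (P_N(\bar H) - I)\bar H^{-1}$, so that $z_N(\theta) = K_\text{in}^\top E_N U\theta + r_N$ with $E_N = (P_N(\bar H) - I)\bar H^{-1} \in \sR^{d_x\times d_x}$ and $r_N = P_N(H)z_0 + K_\text{in}^\top(P_N(\bar H) - I)\bar H^{-1}\tilde c \in \sR^{d_z}$. This is precisely the form of \eqref{eq:time-invertible-proc}.

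Finally, I would verify that $E_N$ is invertible for all large $N$, mirroring the argument in \autoref{lem:tilp}. Since $K_\text{in}$ is surjective, $\bar H$ is positive definite, and convergence of the gradient-based method forces its residual polynomials to satisfy $P_N(\bar H) \to 0$ eigenvalue-wise, all eigenvalues of $\bar H$ being strictly positive. Hence $E_N \to -\bar H^{-1}$, which is invertible, and by continuity of the determinant there is an $N_0$ with $\det E_N \neq 0$ for all $N \geq N_0$. Together with the closed form above, this exhibits $K_\text{in}$, $U$, $E_N$, and $r_N$ witnessing that the procedure is a time-invertible linear procedure, concluding the proof.
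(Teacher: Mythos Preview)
Your proposal is correct and follows essentially the same route as the paper: apply \autoref{lemma:z_N_discrete}, use the intertwining $p(H)K_\text{in}^\top = K_\text{in}^\top p(\bar H)$ and $H^\dagger K_\text{in}^\top = K_\text{in}^\top \bar H^{-1}$ to extract $E_N = (P_N(\bar H)-I)\bar H^{-1}$, and then argue invertibility of $E_N$ from convergence of the residual polynomials. If anything, you are more careful than the paper, which simply posits the representation $F(z,\theta)=\tfrac12\|K_\text{in}z + U\theta + c\|_2^2$, whereas you actually derive the factorization $Q=K_\text{in}^\top K_\text{in}$ and the range inclusions $\mathrm{range}(P),\,c\subseteq\mathrm{range}(Q)$ from the convexity and boundedness hypotheses.
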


\begin{proof}
    Let us give the expression of $F$:
    \begin{equation}
                \label{eq:non-degen-quad}
                F(z, \theta) = \frac12 \|K_\text{in}z + U\theta + c\|_2^2,
    \end{equation}
    with $K_\text{in}\in \sR^{d_x \times d_z}$ surjective (if not we can always reformulate $F$ to have it surjective), $U \in \sR^{d_x \times d_\theta}$, $c \in \sR^{d_z}$.
    From \autoref{lemma:z_N_discrete}, writing $K_\text{in}^\top K_\text{in} = H$, we know that $z_N = P_N(H) z_0 + (P_N(H) - I)H^\dagger (K_\text{in}^\top (U\theta + c))$.
    Rewriting this, with $\bar{H} = K_\text{in} K_\text{in}^\top$:
    \begin{align}
        z_N &= P_N(H) z_0 + (P_N(H) - I)H^\dagger K_\text{in}^\top (U\theta + c) \\
        &= (P_N(H) - I)H^\dagger K_\text{in}^\top U\theta + P_N(H) z_0 + (P_N(H) - I)H^\dagger K_\text{in}^\top c \\
        &= K_\text{in}^\top (P_N(\bar{H}) - I)\bar{H}^\dagger  U\theta + P_N(H) z_0 + (P_N(H) - I)H^\dagger K_\text{in}^\top c \\
        &= K_\text{in}^\top E_N  U\theta + r_N
    \end{align}
    with $E_N = (P_N(\bar{H}) - I)\bar{H}^\dagger$ and $r_N = P_N(H) z_0 + (P_N(H) - I)H^\dagger K_\text{in}^\top c$.
    Because $K_\text{in}$ is surjective $\bar{H}$ is invertible and $\bar{H}^\dagger= \bar{H}^{-1}$ is as well.
    Because $P_N$ is associated with a converging gradient-based method (see more in \autoref{app:res-poly}), $\exists N_0$ such that $\forall N > N_0$, $P_N(\lambda) \neq 1, \forall \lambda \in [\lambda_\text{min}; \lambda_\text{max}]$, $(P_N(\bar{H}) - I)$ is invertible.
    Therefore, $E_N$ is invertible as the product of 2 invertible matrices.
    This concludes the proof.
\end{proof}

% The proof of \autoref{prop:quad-proc} relies on the use of the residual polynomial $P_N$~\citep{hestenes1952methods,fischer2011polynomial} of GBMs.
% This tool is mostly used to prove convergence results~\citep{pedregosa2021residual,goujaud2022super,pedregosa2020acceleration}, and we give more details about it in \autoref{app:res-poly}.

\begin{remark}
    \label{remark:decrease-first-order}
    Gradient-based methods have a rate of convergence proportional to $\max_{\lambda \in [0, \lambda_\text{max}]} |P_N(\lambda)|$~\citep{pedregosa2021residual} where $P_N$ is their associated residual polynomial.
    Therefore, for any converging gradient-based method, $\exists N_0 \text{ s.t. } \forall N > N_0, \forall \lambda \in [\lambda_\text{min}; \lambda_\text{max}] |P_N(\lambda)| < 1$.
\end{remark}

\begin{remark}
    \label{remark:gradient-descent}
    For gradient descent with step size $\frac{1 + \epsilon}{\lambda_{\text{max}}}$ with $-1 < \epsilon < 1$, we have $P_N(\lambda) = (1 - \frac{1 + \epsilon}{\lambda_{\text{max}}} \lambda)^N$~\citep{pedregosa2021residual} $\forall N > 0$, which means that $P_N(\lambda) \neq 1$ if $\lambda \in [\lambda_\text{min}; \lambda_\text{max}]$.
    In the case of gradient descent with an appropriate step size, the inequality~\eqref{ineq:main-ineq} is therefore always true for all optimization steps.
\end{remark}

\begin{remark}
    \label{remark:gd-momentum}
    For gradient descent with momentum with admissible parameters as defined by \citet{pedregosa2021residual}[Blog 2], we can reuse computations made to check the convergence to get $N_0$ from \autoref{remark:gradient-descent}.
    For momentum $m$, $N_0$ is such that:
    \begin{equation}
        \label{eq:momentum-convergence}
        m^{\frac{N_0}{2}} (1 + \frac{1 - m}{1 + m N_0}) < 1
    \end{equation}
\end{remark}

\section{Experimental details}
\label{app:exp_details}
\subsection{DEQs}
In all DEQs experiments, except the stability experiment, we reuse the data, architecture, code (in PyTorch~\citep{paszkePyTorchImperativeStyle2019}) and weights of the original works.
The only difference with the original works is that we use a different number of inner steps for the fixed point resolution, and set the tolerance to a value sufficiently low, so that the maximum number of inner steps is always reached ($10^{-7}$ generally).
We list here the links to the original works public GitHub repositories which contain information on how to download the data, the weights and how to perform inference:
\begin{itemize}
    \item Image classification on ImageNet~\citep{deng2009imagenet} and CIFAR~\citep{Krizhevsky09learningmultiple} and Image segmentation on Cityscapes~\citep{Cordts2016Cityscapes}: \href{https://github.com/locuslab/deq/tree/master/MDEQ-Vision}{locuslab/deq/MDEQ-Vision}~\citep{baimultiscalemodels2020}
    \item Language modeling on WikiText~\citep{merity2017pointer}: \href{https://github.com/locuslab/deq/tree/master/DEQ-Sequence}{locuslab/deq/DEQ-Sequence}~\citep{baiDeepEquilibriumModels2019}
    \item Optical Flow Estimation on Sintel~\citep{Butler:ECCV:2012}: \href{https://github.com/locuslab/deq-flow}{locuslab/deq-flow}~\citep{Bai_2022_CVPR}
    \item Single-image Super resolution on CBSD68~\citep{martin2001database}: \href{https://github.com/wustl-cig/ELDER}{wustl-cig/ELDER}~\citep{zou2023deep}
\end{itemize}

\subsection{DEQs stability}
In order to evaluate the stability of DEQs trained with unrolling (i.e. backpropagating through the iterates of Broyden's method), we needed to implement a differentiable version of Broyden's method.
Except for this, the training code and data pipelines are taken from the original work~\citep{baimultiscalemodels2020} for image classification on CIFAR-10~\citep{Krizhevsky09learningmultiple}.
We simply vary the number of inner steps used for the fixed point resolution and the tolerance of the fixed point resolution at test-time.
The networks used were those following the \texttt{TINY} configuration (see  \href{https://github.com/locuslab/deq/blob/master/MDEQ-Vision/experiments/cifar/cls_mdeq_TINY.yaml}{the original configuration file} for more details).

One might notice that Broyden's method is usually not differentiable because of the use of a line search.
As for the typical DEQ setting, we did not use a line search to train DEQs, whether unrolled or not, and just kept a fixed step size of 1, making it differentiable.

\subsection{(i)MAML}
We recall the main difference between MAML and iMAML.
In MAML, the meta parameters are used as an initialization to a gradient descent for task adapted networks, while in iMAML, the meta parameters are used as an anchor point for the task adapted weights.
Formally, for iMAML, the inner loss is modified to include a regularization term that penalizes the $\ell_2$-norm of the difference between the task adapted parameters and the meta-parameters.
Thanks to this formulation, iMAML can use implicit differentiation to compute the hypergradient, while MAML has to rely on unrolling.

We reimplemented the (i)MAML framework in Jax~\citep{jax2018github} using the recently developed Jaxopt library~\citep{jaxopt_implicit_diff}.
The reason for this was that we wanted a faster implementation, made possible by the machinery of Jax and Jaxopt together.

For the sinusoid regression task, we used the same architecture and hyperparameters as the one introduced by \citet{pmlr-v70-finn17a} (respectively the same hyperparameters as the one of \citet{rajeswaran2019meta}, including the use of line search for gradient descent).
Each sinusoid was generated on the fly, both for test and train data.

The architecture was a 2-hidden-layer MLP with 40 hidden units per layer, transductive batch normalization (i.e. the batch statistics are not stored) and ReLU activations.
For MAML, the inner gradient descent had a step size of 0.01.
For iMAML, the inner gradient descent use line search for a maximum of 16 iterations.
The outer optimization was carried out for 70k steps using Adam~\citep{kingma2014adam} with a learning rate of $10^{-3}$ and all other hyperparameters set to their default values, the meta batch size was 25 and 10 samples were used for validation (outer) loss computation and for the inner loss definition.
The samples are generated on-the-fly.

\subsection{Compute}
All the experiments except the theorem illustrations were run on a public HPC cluster, providing NVIDIA V100 GPUs.
According to the numbers provided by this public HPC cluster, a maximum of 220 GPU days (5291 GPU hours) were used for the experiments (conservative upper bound), which includes all the reruns due to bugs, the experiments not reported in this paper and potential other projects worked on at the same time.

\section{Additional results}
\label{app:add-results}
\subsection{Inner iterations Overfitting for DEQs on training data}
While in \autoref{fig:deq-inner-opt-time-overf-all} the reported performance is the test set performance, the theory developed in \autoref{sec:theory} concerns only the training set performance.
In order to make sure that the behavior we are noticing on the test set performance is not due to some effect of lack of generalization we also report the same figure for ImageNet on training data in \autoref{fig:deq-inner-opt-time-overf-imagenet-train}.

\begin{figure}
    \centering
    \includegraphics{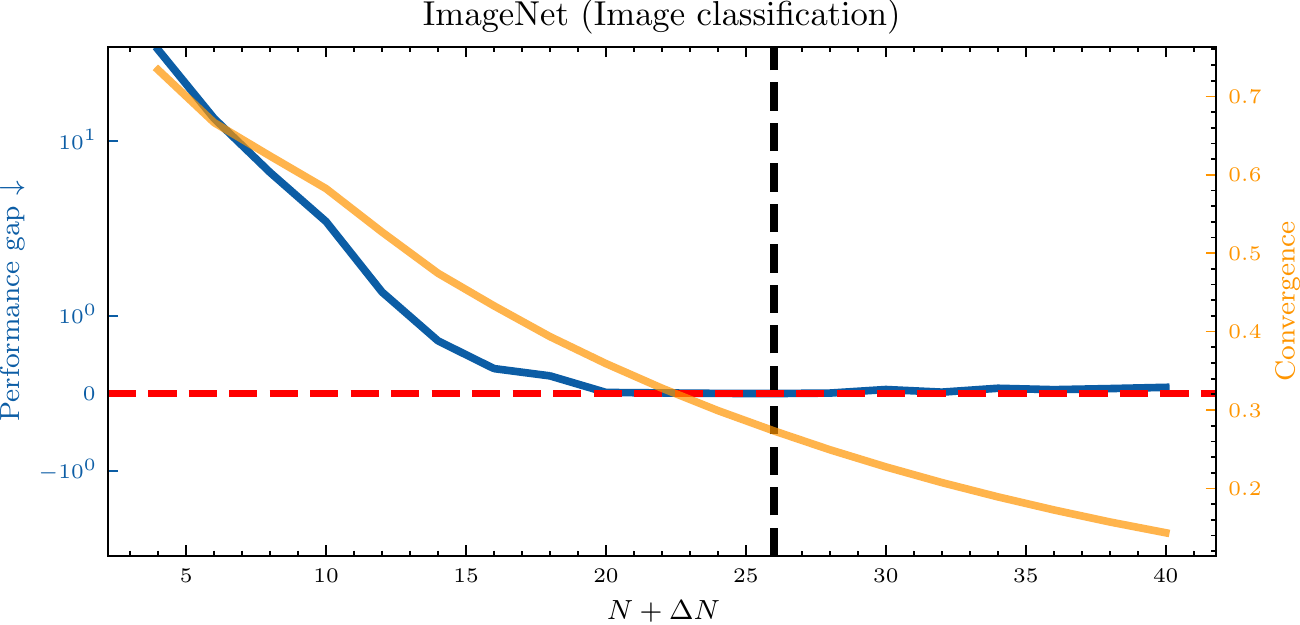}
    \caption{\textbf{Inner iterations overfitting for DEQs on train data.} We report the training set error for different inner optimization times.}
    \label{fig:deq-inner-opt-time-overf-imagenet-train}
\end{figure}

\subsection{Inner iterations Overfitting for DEQs on training loss}
While in \autoref{fig:deq-inner-opt-time-overf-all} the reported performance is not the training loss but for example for image classification the error, the theory developed in \autoref{sec:theory} concerns only the training loss.
In order to make sure that the behavior we are noticing is not due to a discrepancy between the optimized loss and the performance we also report the same figure for ImageNet on training loss in \autoref{fig:deq-inner-opt-time-overf-imagenet-loss}.

\begin{figure}
    \centering
    \includegraphics{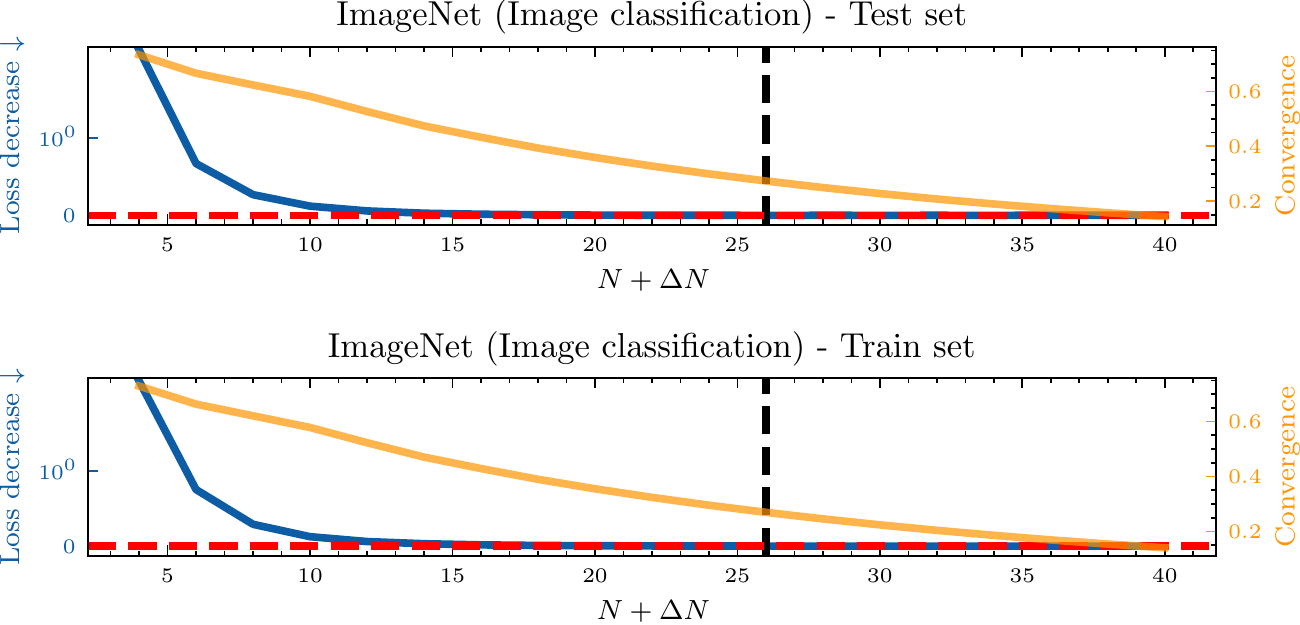}
    \caption{\textbf{Inner iterations overfitting for DEQs on the training loss.} We report the training set error for different inner optimization times.}
    \label{fig:deq-inner-opt-time-overf-imagenet-loss}
\end{figure}

\subsection{Lower bound for non overparametrized inner procedures but with non strongly convex inner and outer problems}
\begin{figure}
    \centering
    \includegraphics{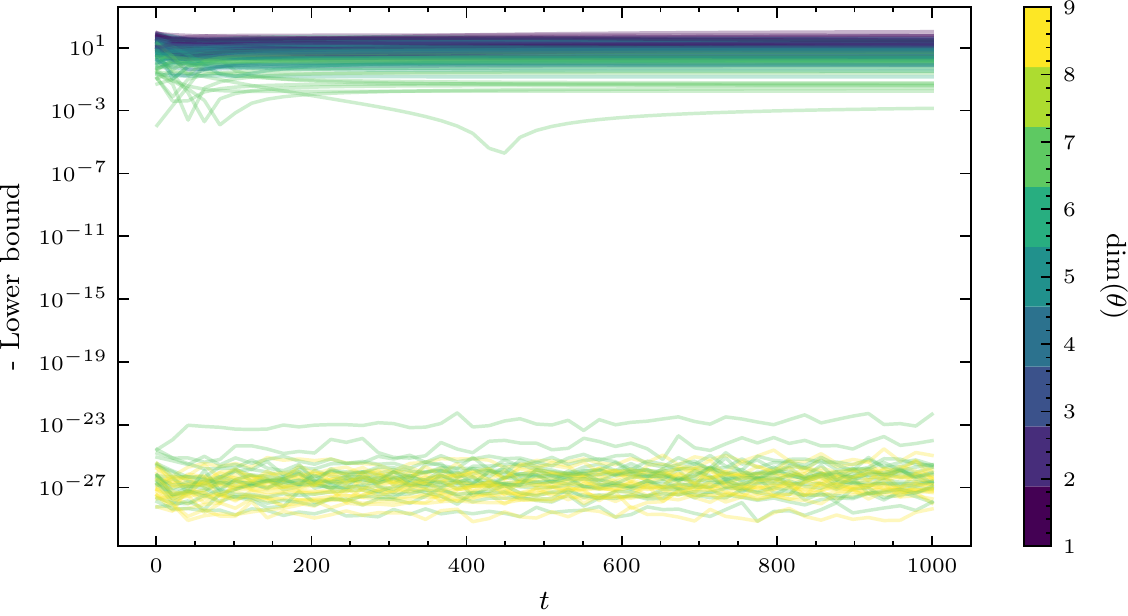}
    \caption{\textbf{Impact of inner problem underparametrization for non strongly convex cases.} Negative Lower bound, i.e. $\frac12 \| \left(\mathcal{P}(K_\text{out} K_\text{in}) - \mathcal{P}(K_\text{out} K_\text{in} E_N U )\right) (K_\text{out} r_N - z^\star)  \|_2^2$, from \autoref{thm:quadratic}. The inner and outer problem are not strongly convex, and the dimension of $z$ is 10. Roughly speaking, the inner problem would therefore be fully parameterized in $\theta$ if it was in dimension 10. We compute the lower bound for different inner optimization times and 20 seeds.}
    \label{fig:quad-U-not-surj-not-strongly-conv}
\end{figure}
We see in \autoref{fig:quad-U-not-surj-not-strongly-conv} that the lower bound can reach 0 for a much smaller $\theta$ dimension when the inner and outer problem are not strongly convex than when they are.

\subsection{Inner iterations Overfitting in Inverse Problems}
\citet{zou2023deep} introduced a new method termed ELDER, where the fixed-point defining function of DEQs is defined as the gradient of another function.
They compare this new approach with the one where the fixed-point defining function has a direct expression.

\begin{figure}
    \centering
    \includegraphics{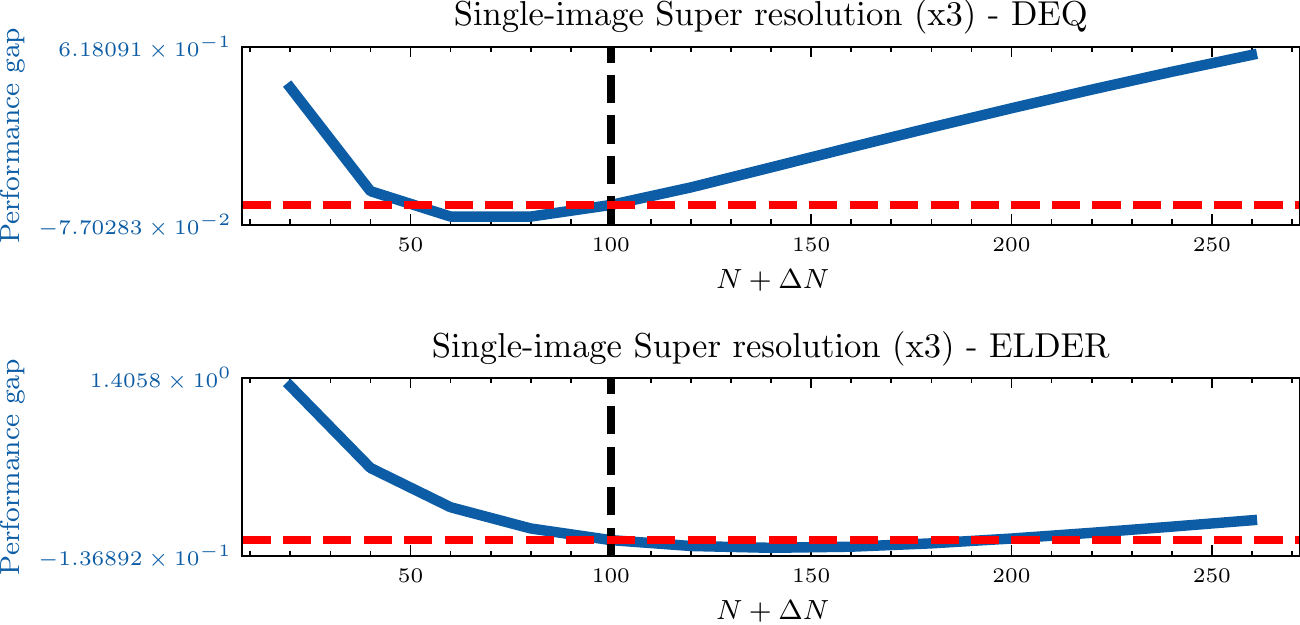}
    \caption{\textbf{Inner iterations Overfitting for Inverse Problems}: Difference between the test loss using $N+\Delta N$ iterations at inference and using $N$ iterations for models trained with $N$ iterations, $D(N, \Delta N) = L(\theta^{\star, N}, N) - L(\theta^{\star, N}, N + \Delta N)$. The black dashed line is at $\Delta N = 0$, i.e. the training number of inner iterations. The red dashed line is at 0, for easier visualization.
    }
    \label{fig:deq-inner-opt-time-overf-ip}
\end{figure}
As can be seen in \autoref{fig:deq-inner-opt-time-overf-ip}, the networks trained by \citet{zou2023deep} do not exhibit an Inner iterations Overfitting as strong as those we see in \autoref{fig:deq-inner-opt-time-overf-all}.
However, we clearly see that in one case it's better to use fewer iterations than at training time (for DEQ) and in the other case it's better to use more iterations (for ELDER).
A middle ground can clearly be achieved by picking the number of iterations used during training.
The reasons for these mismatches could be:
\begin{itemize}
    \item The outer optimization is not optimal.
    \item We do not have an overparametrized (or close to) inner solving procedure.
\end{itemize}
It is also quite surprising that the networks exhibit less convergence stability than the ones shown in \autoref{fig:deq-inner-opt-time-overf-all}.

\end{document}

% This document was modified from the file originally made available by
% Pat Langley and Andrea Danyluk for ICML-2K. This version was created
% by Iain Murray in 2018, and modified by Alexandre Bouchard in
% 2019 and 2021 and by Csaba Szepesvari, Gang Niu and Sivan Sabato in 2022.
% Previous contributors include Dan Roy, Lise Getoor and Tobias
% Scheffer, which was slightly modified from the 2010 version by
% Thorsten Joachims & Johannes Fuernkranz, slightly modified from the
% 2009 version by Kiri Wagstaff and Sam Roweis's 2008 version, which is
% slightly modified from Prasad Tadepalli's 2007 version which is a
% lightly changed version of the previous year's version by Andrew
% Moore, which was in turn edited from those of Kristian Kersting and
% Codrina Lauth. Alex Smola contributed to the algorithmic style files.